\newtheorem{theorem}{Theorem}%[section]
\newtheorem{lemma}[theorem]{Lemma}
\newtheorem{corollary}[theorem]{Corollary}
\newcommand{\cb}[1]{{\ifmmode {\boldsymbol{#1}}\else ${\boldsymbol{#1}}$\fi}}
\newcommand{\cp}[1]{\ifmmode {\mathcal{#1}}\else ${\mathcal{#1}}$\fi}
\newcommand{\bx}{\cb{x}}
\newcommand{\bxc}{{\cb{x}_\mathrm{\!c}}}
\newcommand{\bX}{\cb{X}}
\newcommand{\bM}{\cb{M}}
\newcommand{\bP}[1]{\cb{P}\!\!_{_#1}}
\newcommand{\bmu}{\cb{\mu}} % mean
\newcommand{\bw}{\cb{w}}
\newcommand{\bwc}{{\cb{w}_\mathrm{\!c}}}
\newcommand{\lambdac}{{\lambda_\mathrm{c}}}
\newcommand{\bW}{\cb{W}}
\newcommand{\balpha}{\cb{\alpha}}
\newcommand{\balphac}{{\cb{\alpha}_\mathrm{\!c}}}
\newcommand{\bbeta}{\cb{\beta}}
\newcommand{\bA}{\cb{A}}
\newcommand{\bAc}{{\cb{A}_\mathrm{\!c}}}
\newcommand{\bC}{\cb{C}}
\newcommand{\bCc}{{\cb{C}_\mathrm{\;\!\!\!c}}}
\newcommand{\bCn}{{\cb{C}}}%_{\!\!\not\,\mathrm{c}}}}
\newcommand{\bLambda}{\cb{ \Lambda}} %\it
\newcommand{\bLambdac}{{\cb{ \Lambda}_\mathrm{\!c}}} %\it
\newcommand{\pic}{{\pi_\mathrm{\!c}}}
\newcommand{\bomega}{\cb{\omega}}
\newcommand{\dc}{{d_\mathrm{\!c}}}
\newcommand{\bv}{\cb{v}}
\newcommand{\bdelta}{\cb{\delta}}
\newcommand{\bDelta}{\cb{\Delta}}
\newcommand{\bDeltas}{{\cb{\bDelta}_{\xi}}}
\newcommand{\bK}{{\cb{K}}}%_{\!\!\not\,\mathrm{c}}}}
\newcommand{\bKc}{\cb{K}_\mathrm{\!c}}
\newcommand{\bXc}{\cb{X}_\mathrm{\!c}}
\newcommand{\bI}{{\bf I}}
\newcommand{\tr}{\mathrm{tr}}
\newcommand{\bkappa}{\cb{\kappa}}
\newcommand{\bone}{\cb{1}}
\newcommand{\bzero}{\cb{0}}
\newcommand{\red}{\color{red}}
\newcommand{\blue}{\color{blue}}
\begin{document}

%\title{Centering data %Bridging the gap between centered and uncentered data 
%in machine learning: \\an eigenanalysis}% of the eigendecomposition problem}%:  between the eigenpairs of the Gram matrices}

\title{An eigenanalysis of data centering \\in machine learning}
%\title{An eigenanalysis of the impact of \\centering data in machine learning}

\author{Paul Honeine,~\IEEEmembership{Member,~IEEE}
\IEEEcompsocitemizethanks{\IEEEcompsocthanksitem M.
Honeine is with the Institut Charles Delaunay (CNRS), Universit\'{e} de Technologie de Troyes, Troyes, France.
}
}%

\IEEEcompsoctitleabstractindextext{
\begin{abstract}%\red Version \today% December 2013

Many pattern recognition methods rely on statistical information from centered data, with the eigenanalysis of an empirical central moment, such as the covariance matrix in principal component analysis (PCA), as well as partial least squares regression, canonical-correlation analysis and Fisher discriminant analysis. Recently, many researchers advocate working on non-centered data. This is the case for instance with the singular value decomposition approach, with the (kernel) entropy component analysis, with the information-theoretic learning framework, and even with nonnegative matrix factorization. Moreover, one can also consider a non-centered PCA by using the second-order non-central moment. 

The main purpose of this paper is to bridge the gap between these two viewpoints in designing machine learning methods. To provide a study at the cornerstone of kernel-based machines, we conduct an eigenanalysis of the inner product matrices from centered and non-centered data. We derive several results connecting their eigenvalues and their eigenvectors. Furthermore, we explore the outer product matrices, by providing several results connecting the largest eigenvectors of the covariance matrix and its non-centered counterpart. These results lay the groundwork to several extensions beyond conventional centering, with the weighted mean shift, the rank-one update, and the multidimensional scaling. Experiments conducted on simulated and real data illustrate the relevance of this work.
\end{abstract}

\begin{IEEEkeywords} 
Kernel-based methods, Gram matrix, machine learning, pattern recognition, principal component analysis, kernel entropy component analysis, centering data.
\end{IEEEkeywords}

}

\maketitle

\section{Introduction}

Most pattern recognition in machine learning can be explained by performing an eigenanalysis, with an eigendecomposition or a spectral decomposition ({\em i.e.}, singular value decomposition or SVD) \cite{Sun2009}. These machines seek a set of relevant axes from a given dataset. The principal component analysis (PCA) \cite{PCA} is the most prominent eigenanalysis problem for feature extraction and dimensionality reduction. In this case, the most relevant axes, obtained from the eigendecomposition of the covariance matrix, capture the largest amount of variance in the data. Other machines include multidimensional scaling \cite{MDS}, partial least squares regression (PLS) \cite{PLSR}, canonical-correlation analysis (CCA) \cite{CCA} and its classification-based version known as Fisher discriminant analysis (FDA)\cite{Fis36}. The latter two methods solve a generalized eigendecomposition problem.

Kernel-based machines provide an elegant framework to generalize these linear pattern recognition methods to the nonlinear domain. They rely on the concept of \emph{kernel trick}, initially introduced by Aizerman \emph{et al.} in \cite{Aiz64}. The main breakthrough lies in two folds. On the one hand, most pattern recognition, classification and regression algorithms can be written in terms of inner products between data. On the other hand, by substituting each inner product by a (positive definite) kernel function, a nonlinear transformation is implicitly operated on the data without any significant computational cost. Therefore, the eigenanalysis as well as most of the operations are performed on the kernel matrix, which corresponds to an inner product ``Gram'' matrix in some feature space. This property is revealed in kernelized versions of PCA \cite{Sch99}, PLS \cite{Ros02}, CCA \cite{Fukumizu2007} and FDA \cite{MikRaeWesSchMue99}. See \cite{Shawetaylor_Cristianini,Bishop06,Has09} for a survey of kernel-based machines.

In several kernel-based machines, as given for instance in PCA, CCA, and FDA, data should be centered in the feature space, by shifting the origin to the centroid of the data. From an algorithmic point of view, centering is performed easily with matrix algebra, either in batch mode by a subsequent column and row centering of the kernel matrix \cite{Sch99}, or in a recursive way when dealing with online learning \cite{12.tpami}. From a theoretical point of view, centering reveals central moments, {\em i.e.,} moments about the centroid/mean of the available data, as well as other related statistics. Well-known central moments include the second-order central moment, also called covariance, which is investigated by the PCA for estimating the maximum-variance directions. Furthermore, these directions minimize the reconstruction error.

Many researchers advocate the use of non-centered data in pattern recognition. Information is extracted directly, either with the data matrix and its Gram matrix, or with non-central moments. Several motivations were revealed in favor of working on non-centered data. The intuitive motivation is the application of the spectral decomposition without data-centering in many pattern recognition and machine learning problems, thus providing a sort of a non-centered PCA by using the second-order non-central moment. This is the case for instance in signal analysis and classification \cite{svd_feature} and in designing dictionaries for sparse representation \cite{KSVD}. A key motivation towards keeping data non-centered is the nonparametric density estimation with kernel functions \cite{Gir02}, as revealed recently with exceptional performance in the (kernel) entropy component analysis (ECA) \cite{KECA} and the information-theoretic learning framework \cite{Principe2010}. See also \cite{Jenssen2009}. A further motivation is that data often deviate from the origin, and the measure of such deviation may constitute an interesting feature, such as in hyperspectral unmixing \cite{unmix_overview}. It turns out that in many fields of computer science, signal processing and machine learning, acquired data are nonnegative, and even positive. This is the case with the study of gene expressions in bioinformatics \cite{Yeung01092001}, with eigenfaces in the computer vision problem of human face recognition \cite{eigenfaces}, with online handwritten character recognition \cite{pca_ocr}, and with numerous applications for nonnegative matrix factorization \cite{lee99,ICAhandbook10}. As a consequence, nonnegative Gram matrices are pervasive. Such information is lost when centering the data, as well as several interesting properties\footnote{For instance, the Perron--Frobenius theorem \cite{berman1994nonnegative,Hor12} states that, under some mild conditions, the non-negativity of a matrix is inherited by its unique largest eigenvalue and that the corresponding eigenvector has positive components. This result is no longer valid when the data are centered.}.

%The power of the eigen-decomposition of the Gram matrix has been revealed in many kernel-based methods, initiated by the pioneering work of Sch\"{o}lkopf \emph{et al.} \cite{Sch99} with the kernel-PCA.

%Nonlinear pattern recognition with kernel-based machines has been extensively studied with the kernel-PCA.

% The principal component analysis (PCA) is an essential application of the eigen-decomposition ...PCA is a powerful tool for data analysis and dimensionality reduction \cite{PCA} 

%iterative techniques for eigen-decomposition and spectral decomposition (also called singular value decomposition) \cite{Bun78,Oja82,San89}. 

The issue of centering the data versus keeping the data uncentering is an open question in pattern recognition: (Pearson) correlation versus congruence coefficients, (centered) PCA versus non-centered PCA (or SVD), covariance and centered Gram matrices versus their non-centered counterparts. In this paper, we study the impact of centering the data on the distribution of the eigenvalues and eigenvectors of both inner-product and outer-product matrices. By examining the Gram matrix and its centered counterpart, we show the interlacing property of their eigenvalues. We devise bounds connecting the eigenvalues of these two matrices, including a lower bound on the largest eigenvalue of the centered Gram matrix. Furthermore, we examine the eigenvectors of the inner product and outer product matrices. We provide connections between the most relevant eigenvector of the covariance matrix and that of the non-centered matrix, a result that corroborate the work in \cite{PCAuncenter}.

In our study, we focus on the eigenanalysis of the Gram matrices. This work opens the way to understanding the impact of centering the data in most kernel-based machine. This is shown by bridging the gap between the (centered) PCA and the (non-centered) ECA. Moreover, our work goes beyond the PCA and ECA, since it extends naturally to many kernel-based machines where the eigen-decomposition of the Gram matrix is crucial. To this end, we revisit the multidimensional scaling problem where centering is essential. Moreover, we provide extension beyond conventional mean-centering.

%(nonlinear)  . This issue is highly controversial in the literature, even in the conventional linear feature extraction methods. For instance, on the one hand, some researchers have been working on the eigen-decomposition of the second-order non-central moment matrix while, on the other hand, other researchers has been using the eigen-decomposition of the covariance matrix, {\em i.e.,} the second-order central moment matrix. While the latter considers the moment about the centroid of the data, thus ignoring the mean information of the data, it is often recommended from a statistical point of view as illustrated with the PCA algorithm. 

\subsection*{Related (and unrelated) work}% on centering or not}% in machine learning}

In machine learning for classification and discrimination, the issue of centering the data has been addressed in few publications. In the Bayesian framework proposed in \cite{Gestel2002}, the bias-free formulation of support vector machines (SVM) and least-squares SVM yields the eigendecomposition of the centered Gram matrix, while Gaussian processes yield similar expressions applied on the non-centered Gram matrix. In \cite{Park2005}, the authors propose a modified FDA to take into account the fact that centering leads to a singular Gram matrix, even when the non-centered Gram matrix is non-singular. More recently, it is devised in \cite{Cortes05} that data and label should be centered when dealing with the alignment criterion. In \cite{Markovsky09}, the author consider the issue of optimizing the centering as well as the low-rank approximation problem.

In Bayesian statistics, the impact of centering has been extensively studied in multilevel models, when dealing with hierarchically nested models \cite{Raudenbush89,Longford89}. In this case, centering is performed either with the grand mean, or ``partially'' with a group mean centering, at different levels \cite{Hofmann1998,Paccagnella06}. See \cite[Chapter 5.2]{kreft1998} for a comprehensive review on the centering issue in multilevel modeling. This problem is revisited within a Bayesian framework in \cite{Papaspiliopoulos03,Yu2011}, with the issue of centered or non-centered parameterisations. This is beyond the scope of this paper, since we investigate non-parametric methods such as PCA and ECA.

 To the best of our knowledge, only Cadima and Jolliffe investigated in \cite{PCAuncenter} the relationship between the PCA and its non-centered variant. To this end, they confronted the eigendecomposition of the covariance matrix with the eigendecomposition of its non-centered counterpart. Such connection can be done thanks to the rank-one update that connects both matrices. In this paper, we study the eigendecompositions of the centered and the non-centered Gram matrices. It is easy to see that this is a much harder problem. By performing an eigenanalysis of the Gram matrices, we provide a framework that integrates the analysis of PCA, ECA, MDS, and beyond. This work opens the door to the study of centering or not in kernel-based machines. %More technical details on the connection between this paper and the work of Cadima and Jolliffe in \cite{PCAuncenter} are given in Section~\ref{sec:connections}.

\subsection*{Notation}

The matrix $\bI$ is the $n$-by-$n$ identity matrix, and the vector $\bone$ is the all-ones vector of $n$ entries. Then, $\bone\bone\!^\top$ is the $n$-by-$n$ matrix of all ones, and $\bone\!^\top\!\bone=n$. Moreover, $\bone\!^\top\! \bM \bone$ is the grand sum of the matrix $\bM$. The subscript $~_\mathrm{\!c}$ allows to recognize the case of centered data, as opposed to the %subscript $~_{\!\!\not\,\mathrm{c}}$ which denotes the 
non-centered one: $\bXc$ versus $\bX$, $\bKc$ versus $\bK$, $\bCc$ versus $\bCn$, $\lambdac_{i}$ versus $\lambda_{i}$, etc. 

The spectral decomposition ({\em i.e.}, singular value decomposition) of a matrix $\bM$ defines its singular values $\sigma_1(\bM), \sigma_2(\bM), \ldots$, given in non-increasing order. The $i$-th largest eigenvalue of a square matrix $\bM$ is denoted by $\lambda_{i}(\bM)$, and its trace by $\tr(\bM)$. A first analysis on the eigenvalues of a matrix is given by its trace, with:
\begin{equation}\label{eq:trace}
	\tr(\bM) = \sum_i \lambda_i(\bM).
\end{equation}
This corresponds to the variance of the data when dealing with the data covariance matrix.

For the sake of clarity, the $i$-th largest eigenvalues of the inner product matrices $\bK$ and $\bKc$ are respectively $\lambda_{i}$ and $\lambdac_{i}$, namely $\lambda_{i} = \lambda_{i}(\bK)$ and $\lambdac_{i} = \lambdac_{i}(\bKc)$. %Let $\pi_i$ be the normalized $i$-th eigenvalue, by dividing the corresponding eigenvalue by the sum of all the eigenvalues, namely $\pi_i = \frac{\lambda_i}{\sum_j \lambda_j}$. Likewise, $\pic_i = \frac{\lambdac_i}{\sum_j \lambdac_j}$. 
The eigenpair $(\lambda_{i},\balpha_i)$ of $\bK$ denotes its $i$-th largest eigenvalue $\lambda_{i}$ and its corresponding eigenvector $\balpha_i$. The {\em first} eigenvector is the one associated to the largest eigenvalue.

\section{Introduction}

In this section, we present the eigendecomposition problems of the inner product and outer product matrices, for both non-centered and centered data. Then, two kernel-based machines are presented, PCA and ECA, in order to contrast the paradigm of centering or not the data.

\subsection{Non-centered data}

Consider a set of $n$ available samples, $\bx_1,\bx_2, \ldots, \bx_n$, from a vector space of dimension $d$, with the conventional inner product $\bx_i^\top \bx_j$. Let $\bX= [\bx_1 ~~ \bx_2 ~ \cdots ~ \bx_n]$ be the data matrix, and $\bK = \bX\!^\top\! \bX$ be the corresponding Gram matrix, {\em i.e.,} the inner product matrix. It turns out that this matrix encapsulates the essence of the information in the data, as illustrated with the kernel trick throughout the literature. % where the inner product is generalized with the use of a kernel function {\red (see Section~\ref{sec:kernels})}. 
It is therefore natural to focus on the Gram matrix in out study.

Let $(\lambda_i,\balpha_i)$ be an eigenpair of the matrix $\bK$, namely
\begin{equation}\label{eq:eig_K}
	\bK \, \balpha_i = \lambda_i \, \balpha_i,
\end{equation}
for $i=1,2,\ldots, n$, the eigenvalues $\lambda_1, \lambda_2, \ldots, \lambda_n$ being given in non-increasing order. These quantities describe the spectral decomposition of the matrix $\bK$, with
\begin{equation}\label{eq:svd_K}
	\bK = \bA \, \bLambda \, \bA\!^\top.
\end{equation}
The eigendecomposition of the Gram matrix $\bK$ is related to the spectral decomposition of the data matrix $\bX$, since the latter is given by
\begin{equation}\label{eq:svd_X}
	\bX = \bW \bLambda^{\frac12} \bA\!^\top,
\end{equation}
where $\bA$ is the $n$-by-$n$ matrix whose columns are the eigenvectors $\balpha_1, \balpha_2, \ldots, \balpha_n$, and $\bLambda^{\frac12}$ is the $d$-by-$n$ rectangular diagonal matrix whose $i$-th diagonal entry is 
\begin{equation}\label{eq:sigma}
	\sigma_i(\bX) = \sqrt{\lambda_i}.
\end{equation}

The $d$ columns of $\bW$ are called the left-singular vectors of $\bX$. They define the spectral decomposition of the matrix $\bX\bX\!^\top$, which is known as the realized covariation matrix \cite{Barndorff-Nielsen2004} in financial economics. For a coherent analysis, we consider in this paper the second-order {\em non-central} moment matrix $\bCn = \tfrac1n \sum_{i=1}^n \bx_i \, \bx_i^\top$, written in matrix form as $\bCn = \tfrac1n \bX\bX\!^\top$. Following the spectral decomposition of $\bX$ in \eqref{eq:svd_X}, we get
\begin{equation*}
	\bC \, \bw_i = \tfrac{1}{n} \lambda_i \, \bw_i,
\end{equation*}
where $\bW = [\bw_1 ~~ \bw_2 ~~ \cdots ~~ \bw_d]$. The matrix $\bCn$ is less known than the second-order {\em central} moment matrix. The latter, called covariance matrix, is obtained after centering the data, as given in the following.

\subsection{Centered data}

Consider centering the data, $\bxc_i = \bx_i - \bmu$ for $i=1,2, \ldots, n$, where $ \bmu = \frac1n \sum_{i=1}^n \bx_i$ is the empirical mean ({\em i.e.}, centroid). We get in matrix form $\bXc = \bX - \bmu \bone\!^\top$, where
\begin{equation*}
	\bmu = \tfrac1n \bX \bone.
\end{equation*}
From the spectral decomposition of the matrix $\bK$ in \eqref{eq:svd_K}, we can rewrite the norm of the mean as
\begin{equation}\label{eq:norm_mean}
	\|\bmu\|^2 
	= \tfrac{1}{n^2} \bone\!^\top\! \bK \bone
	%= \tfrac{1}{n^2} \bone\!^\top\! \bA \, \bLambda \, \bA\!^\top \bone
	= \tfrac{1}{n^2} \sum_{i=1}^n \lambda_i (\balpha_i\!^\top\! \bone)^2,
\end{equation}
which illustrates how each pair of eigenvalue and eigenvector contributes to the norm of the data mean. The impact of centering the data is clear on both the Gram and the covariance matrices, as illustrated next.

Let $\bKc=\bXc\!\!^\top\! \bXc$ be the Gram matrix of the centered data, with entries $\bxc_i^\top \bxc_j$ for $i,j=1,2, \ldots, n$, namely% Then, we have
\begin{align}\label{eq:Kc_Kn0}
	{\bKc} %&=  \nonumber\\
    %&= (\bX - \bmu \bone\!^\top)\!^\top (\bX - \bmu \bone\!^\top)\nonumber\\
    &= \bK - \tfrac1n \bone\bone\!^\top\! \bK 
		- \tfrac1n \bK \bone\bone\!^\top  
		+ \tfrac{1}{n^2} \bone\bone\!^\top\! \bK \bone\bone\!^\top,
\end{align}
Let $(\lambdac_i,\balphac_i)$ be an eigenpair of this matrix, then:%for any $i$:%=1,2,\ldots, n$:
\begin{equation}\label{eq:eig_Kc}
	\bKc \, \balphac_i = \lambdac_i \, \balphac_i.
\end{equation}

Let $\bCc=\tfrac1n \bXc \bXc\!^\top$ be the covariance matrix, namely the second-order central moment matrix defined by 
%$$\bCc = \tfrac1n \sum_{i=1}^n (\bx_i - \bmu)(\bx_i - \bmu)\!^\top.$$
%Therefore, we have in matrix form
\begin{align}\label{eq:Cc_Cn0}
\bCc 	%= \tfrac1n (\bX - \bmu \bone\!^\top) (\bX - \bmu \bone\!^\top)^\top
	= \bCn - \bmu \bmu\!^\top.
\end{align}
%where the identity $\bone\!^\top\! \bone = n$ is used. 
%The matrix $\bXc \bXc\!\!^\top$ is often called the scatter matrix. 
The eigenpairs of this matrix define the eigenproblem
\begin{equation}\label{eq:eig_Cc}
	\bCc \, \bwc_i = \tfrac{1}{n} \lambdac_i \, \bwc_i.
\end{equation}
%where $\tfrac{1}{n} \lambdac_i$ corresponds to the variance of the data along the axe $\bwc_i$.

\subsection*{Nonlinear extension using kernel functions}

%Kernel functions provide a nonlinear extension of the conventional inner product.  

A symmetric kernel $\kappa(\cdot,\cdot)$ is called a positive definite kernel if it gives rise to a positive definite matrix, namely for all $n\in I\!\!N$ and $\bx_1,\bx_2, \ldots, \bx_n$ we have  
\begin{equation}\label{eq:pd_kernel}
	\bbeta\!^\top\! \bK \bbeta \geq 0,
\end{equation}
for all vectors $\bbeta$, where the matrix $\bK$ has entries $\kappa(\bx_i,\bx_j)$. Such kernel functions provide a nonlinear extension of the conventional inner product since, thanks to Mercer's theorem \cite{Mercer,reproducingkernel}, $\kappa(\bx_i,\bx_j)$ corresponds to an inner product between transformed samples $\bx_i^\phi$ and $\bx_j^\phi$ in some feature space, namely
\begin{equation}\label{eq:kernel}
	\kappa(\bx_i,\bx_j) = {\bx_i^\phi}^\top \bx_j^\phi.
\end{equation}
Examples of nonlinear kernels include the polynomial kernel, of the form $(c+\bx_i^\top \bx_j)^p$, and the Gaussian kernel $\exp(-\tfrac{1}{2\sigma^2} \|\bx_i - \bx_j\|^2)$ where $\sigma$ is the tunable bandwidth parameter. %The latter kernel belongs to the class of unit-norm kernels ...

It turns out that the derivations given in this paper can be easily generalized to kernel functions, since a kernel matrix corresponds to a Gram matrix in some feature space. Centering is performed in the feature space, with the centroid\footnote{In most pattern recognition tasks, one does not need to quantify $\bmu^\phi$, but only its inner product with any $\bx_i^\phi$. One can estimate its counterpart in the input space. This is the pre-image problem, which is clearly beyond the scope of this paper. See \cite{11.spm} for a recent survey.} $\bmu^\phi = \tfrac1n \bX^\phi \bone$. As a consequence, we get the same expression as in \eqref{eq:norm_mean}, with $\| \bmu^\phi \|^2 = \tfrac{1}{n^2} \bone\!^\top\! \bK \bone$.

%y replacing the inner product $\bx_i^\top \bx_j$ in the Gram matrix by any nonlinear kernel $\kappa(\bx_i,\bx_j)$. 
%The definition of the PCA using the Gram matrix corresponds to the kernel-PCA associated with the linear kernel. By replacing the inner product $\bx_i^\top \bx_j$ in the Gram matrix by any nonlinear kernel $\kappa(\bx_i,\bx_j)$, we get a nonlinear extension of the classical PCA \cite{KPCA}. 
%Indeed, any positive definite kernel  % in an induced Hilbert space $\cp{H}$, where $\phi(\cdot)$ is a kernel-induced transformation

%Thus, we have $\bK = {\bX^\phi\!}^\top\! \bX^\phi$. This illustrates that all the above expressions can be derived in the feature space, by replacing $\bX$ by $\bX^\phi$. This means that 

%{\magenta
%
%\subsubsection{Unit-norm kernels}
%
%Gaussian, exponential, ...
%
%
%For a unit-norm kernel, we have $\tr(\bK) = n$. From Lemma~\ref{th:traceK}, we have $\tr(\bKc) = n (1-\|\bmu\|^2)$, and therefore the ratio of the traces is
%\begin{equation*}
%	\gamma = \frac{\tr(\bKc)}{\tr(\bK)} = 1- \tfrac1n \bone\!^\top\! \bK \bone%\|\bmu\|^2.
%\end{equation*}
%}

\subsection{The paradigm of centering or not the data}

In order to confront centering the data with keeping the data non-centered, we present next two well-known machines for pattern recognition. On the one hand, advocating the use of central matrices, the PCA is presented in its two-folds, the conventional and the kernelized formulations. On the other hand, advocating the exploration of non-centered data, nonparametric density estimation using the ECA. %We conclude by confronting them. 

%The relationship between the above described matrices 
%reveal 
%exhibit
%We illustrate next the motivations of centering or using the non-centered data. 
%To illustrate the relationship between the matrices described so far, .

\subsubsection{Case study 1: Principal component analysis}\label{sec:PCA}

The PCA seeks the axes that capture most of the variance within the data \cite{PCA}. It is well-known\footnote{By writing these vectors in a matrix $\bW$, maximizing the variance of the projected data can be written as $\arg\max \tr(\bW \bCc \bW\!^\top)$, under the constraint $\bW\!^\top\! \bW = \bI$. By using the Lagrangian multipliers and taking the derivative of the resulting cost function, we get the corresponding eigenproblem.} that these axes are defined by the eigenvectors associated to the largest eigenvalues of the covariance matrix $\bCc$, with %as given in \eqref{eq:eig_Cc}, namely
\begin{equation*}
	\bCc \, \bwc_i = \tfrac{1}{n} \lambdac_i \, \bwc_i.
\end{equation*}
Then, $\tfrac{1}{n} \lambdac_i$ measures the variance along the axe $\bwc_i$, while the normalized $i$-th eigenvalue $\pic_i = \frac{\lambdac_i}{\sum_j \lambdac_j}$ accounts for the proportion of total variation. It is worth noting from \eqref{eq:trace} that the total variance of the data is given by the trace of $\bCc$.%, namely
%\begin{equation}\label{eq:eigen-trace}
%\end{equation}

By substituting the definition of $\bCc$ in the eigenproblem \eqref{eq:eig_Cc}, we get
\begin{align}\label{eq:wci_eigen}
	\bwc_i = \tfrac{n}{\lambdac_i} \bCc \, \bwc_i %\\
		= \tfrac{1}{\lambdac_i} \sum_{j=1}^n (\bxc_j^\top \bwc_i) \, \bxc_i, %\nonumber
\end{align}
and therefore each $\bwc_i$ lies in the span of the data. This means that there exists a vector $\balphac_i$ such that $\bwc_i = \bXc \, \balphac_i$. By injecting this relation in the above expression, we get $\bXc \, \balphac_i = \tfrac{1}{\lambdac_i} \bXc \bXc\!\!^\top\! \bXc \, \balphac_i$. Equivalently, by multiplying each side by $\bXc\!\!^\top$, we get $\bKc \, \balphac_i = \tfrac{1}{\lambdac_i} \bKc^2 \balphac_i$. Thus, we have the following eigenproblem
\begin{equation*}
	\bKc \, \balphac_i = {\lambdac_i} \, \balphac_i.
\end{equation*}
The eigenvectors of $\bKc$ allow to identify the projection of any sample $\bx$ onto the corresponding eigenvectors of $\bCc$. Representing it on a set of eigenvectors is given by
\begin{equation}\label{eq:project}
	\sum_k (\bx\!^\top\! \bwc_k) \, \bwc_k
	= \sum_k  (\bx\!^\top\! \bXc \balphac_k) \bXc \balphac_k.
\end{equation}

In order to satisfy the unit-norm of $\bwc_i$ for any $i$, the eigenvectors of $\bKc$ should be normalized. Indeed, we have 
%\begin{equation*}
$\bwc_i^\top \bwc_i 
	= \balphac_i^\top \bXc\!\!^\top\! \bXc \balphac_i
	= \balphac_i^\top \bKc \balphac_i
	= \lambdac_i \balphac_i^\top \balphac_i$. 
%\end{equation*}
Therefore, one can define each feature $\bwc_i$ directly from the eigenvector $\balphac_i$ of the Gram matrix $\bKc$, after normalization such that 
\begin{equation}\label{eq:kpca_normalization}
	\|\balphac_i\|^2 = \frac{1}{\lambdac_i}.
\end{equation}
This scaling allows to preserve the variance of the data along the respective axes \cite{KPCA}. When one needs to fix the scale along all these directions, the following normalization is considered:
\begin{equation}\label{eq:kpca_normalization2}
	\|\balphac_i\|^2 = \frac{1}{\lambdac_i^2},
\end{equation}
This scaling allows to set a unit variance within projected data on each axe \cite{Tax02}.

\subsubsection{Case study 2: Nonparametric density estimation}\label{sec:KECA}

Nonparametric density estimation is essential in many applied mathematical problems. Many machine learning techniques are based on density estimation, often with a Parzen window approach\cite{Parzen1962,Izenman1991}. This is the case of the information-theoretic methods \cite{Principe2010}, which are essentially based on the quadratic R\'enyi entropy, of the form $ - \log \int p(\bx)^2 d\bx$ for a probability density $p(\cdot)$. It is also the case of the (kernel) entropy component analysis (ECA) for data transformation and dimensionality reduction \cite{KECA}. See also \cite{Gir02} for more details.

Often unknown, the probability density $p(\cdot)$ is estimated using a Parzen estimator of the form $ \hat{p}(\bx)=\frac{1}{n}\sum_{i=1}^n \kappa(\bx,\bx_i)$ for a given kernel function centered at each available sample $\bx_i$. By using the kernel matrix $\bK$, the estimator is given by
\begin{equation*}%\label{eq:hatp_ver0}
	\hat{p}(\bx) = \frac{1}{n}\sum_{i=1}^n \kappa(\bx,\bx_i)
			= \tfrac1n \bone\!^\top \bkappa(\bx),
\end{equation*}
where $\bkappa(\bx)$ is the vector of entries $\kappa(\bx,\bx_i)$ for $i=1,2, \ldots, n$. From the definition \eqref{eq:kernel}, it is easy to see that this expression corresponds to the inner product between the sample and the mean, with
\begin{equation*}%\label{eq:hatp_innerproduct}
	\hat{p}(\bx) = {\bx^\phi}^\top \Big(\tfrac{1}{n}\sum_{i=1}^n \bx_i^\phi \Big)
			= {\bx^\phi}^\top \bmu^\phi
	%	\hat{p}(\bx) = \Big\langle \phi(\bx), \tfrac{1}{n}\sum_{i=1}^n\phi(\bx_i) \Big\rangle_\cp{H}.
\end{equation*}
The quantity $\int p(\bx)^2 d\bx$, related to the quadratic R\'enyi entropy, is therefore estimated by $\int \hat{p}(\bx)^2 d\bx = \| \bmu^\phi \|^2$, which leads to
\begin{equation}\label{eq:keca}
	\int \hat{p}(\bx)^2 d\bx %= \| \bmu^\phi \|^2 =
	=\tfrac{1}{n^2} \sum_{i=1}^n \lambda_i (\balpha_i\!^\top\! \bone)^2,
\end{equation}
where \eqref{eq:norm_mean} is used. This expression uncovers the composition of the entropy in terms of eigenvectors of $\bK$. This is the main motivation of the ECA, where the relevant eigenvectors are selected in order to maximize the estimated entropy, thus the smallest terms $\lambda_i (\balpha_i\!^\top\! \bone)^2$. Therefore, any eigenvector $\balpha_i$ for which $\balpha_i\!^\top\! \bone\neq0$ and $\lambda_i\neq0$ contributes to the entropy estimate, in contrast with the PCA where only eigenvectors associated to non-zero eigenvalues contribute to the variance.

%One can also consider a truncated density estimator, by projecting each sample into a relevant subspace constructed from the eigenvectors of $\bCn$ \cite{Gir02}. By analogy with the projection defined in \eqref{eq:project}, expression \eqref{eq:hatp_innerproduct} becomes
%\begin{align*}
%	\hat{p}(\bx) 
%	%&= \Big( \tfrac{1}{n} \sum_{i=1}^n \bx_i^\phi \Big)^\top {\bx^\phi}\\	
%	&\approx \big(\tfrac1n \bX^\phi \bone\big)^\top 
%	\Big( \sum_k ({\bx^\phi\!}^\top\! \bw_k) \bw_k \Big)
%\\	&= \tfrac1n \bone\!^\top\! {\bX^\phi}^\top
%	\sum_k ({\bx^\phi\!}^\top\! \bX^\phi \balpha_k) \bX^\phi \balpha_k
%\\	&= \tfrac1n
%	\sum_k ({\bx^\phi\!}^\top\! \bX^\phi \balpha_k) \,  \bone\!^\top\!\bK \balpha_k
%\\	&= \tfrac{\lambda_k}{n} 
%	\sum_k ( \bkappa(\bx)\!^\top\! \balpha_k) \, \big(\bone\!^\top\!\balpha_k \big)
%\end{align*}
%where the approximation follows from the projection of $\bx^\phi$ onto the $\bw_k$, the first equality is due to $\bw_k = \bX^\phi \, \balpha_k$ and the last equality is due to the eigenproblem \eqref{eq:eig_K}. One can get rid of the multiplicative eigenvalue by considering the normalization given in \eqref{eq:kpca_normalization2}. By denoting $\bA_\ell =  [\balpha_1 ~~ \balpha_2 ~~ \cdots ~~ \balpha_\ell]$ a truncated version of the matrix $\bA$ given in \eqref{eq:svd_K}, we get the matrix form
%\begin{equation}\label{hatp}
%	\hat{p}(\bx) = \tfrac1n \bone\!^\top\! \bA_\ell \bA_\ell\!\!\!^\top \bkappa(\bx).
%\end{equation}
%This expression reduces to \eqref{eq:hatp_ver0} when all $\balpha_i$ are considered, namely by substituting $\bA_\ell$ by $\bA$. 

Finally, we emphasize that non-centered data is used in the density estimation. Centering the data leads to a null density estimator, which yields an infinite quadratic R\'enyi entropy. This fact is also corroborated by several studies, including the (kernel) entropy component analysis. See \cite{KECA} for more details.

\subsection{Features from centered vs non-centered data}

Even in conventional PCA, the issue of centering is still an open question\footnote{For instance in R (The R Project for Statistical Computing) \cite{R}, there are two ways to performs PCA: On the one hand, the {\em R-mode} by using the eigendecomposition of the covariance matrix as given in \eqref{eq:eig_Cc}, with the function {\tt princomp}; and on the other hand, the {\em Q-mode} by using the singular value decomposition of the non-centered data, as given in \eqref{eq:svd_X}, with the function {\tt svd}.}. To the best of our knowledge, only the work in \cite{PCAuncenter} studied the link between the eigendecompositions of $\bCn$ and $\bCc$. Expression \eqref{eq:Cc_Cn0} reveals the rank-one update nature between these two matrices. The analysis of the Gram matrices $\bK$ and $\bKc$ is obviously a much harder problem, as illustrated in expression \eqref{eq:Kc_Kn0}.

In this paper, we take the initiative to study the of the inner product matrices, $\bK$ and $\bKc$, and carry on with the eigenanalysis of the outer product matrices, $\bCn$ and $\bCc$. It turns out that the work \cite{PCAuncenter} on $\bCn$ and $\bCc$ can be derived easily from the proposed approach. Moreover, the study of the inner product matrices broadens the scope of the work to the analysis of all kernel-based techniques \cite{KSVD,KECA,Shawetaylor_Cristianini}, beyond the PCA approach. Next section gives the main contributions of this paper, while this study is completed in Section~\ref{sec:extensions} by several extensions, beyond centering.

%{\blue Furthermore, the impact of the kernel function provides new insights. 

\section{Main results}

The main results are given next, in two-folds: the (inner product) Gram matrix and the (outer product) covariance matrix. For each, we describe the relations between the eigenvectors of the centered matrix with those obtained from the non-centered one. The relations between the eigenvalues are studied by examining the inner product matrices, which allows the generalization of these results to nonlinear kernel functions. But before, we need to briefly introduce the orthogonal projection, and its link to the centering operation.

\subsubsection*{Background on orthogonal projections}% in linear algebra}

The matrix of orthogonal projections onto the subspace spanned by the columns of a given matrix $\bM$ is defined by $\bP{\bM} = \bM \big( \bM\!^\top \bM \big)^{-1} \bM\!^\top$, while $\bI - \bP{\bM}$ denotes te projection onto its orthogonal complement. Projections are idempotent transformations, {\em i.e.,} $\bP{\bM}\bP{\bM} = \bP{\bM}$. In particular, we are interested in this paper in the projection onto the all-ones vector $\bone$ of $n$ entries, with
\begin{equation}\label{eq:P1}
	\bP{\bone} = \frac1n \bone \bone\!^\top.
\end{equation}

By considering the projection of the data onto the subspace spanned by $\bone$ and its complement, we have\footnote{Since the data are given column-wise in the matrix $\bX$, the mean $\bmu$ obtained from the operation $\bX \bP{\bone}$ is the vector of means of each row of $\bX$. This is in opposition to the operation $\bP{\bone}\bX$ which provides the means of each of the columns, {\em i.e.,} each $\bx_i$.}%\footnote{ the projection operation is given by the transpose of the matrix $(\bP{\bone} \bX^\top)$, which is equal to $\bX \bP{\bone}$.}
\begin{align}\label{eq:XP}
	{\bX \bP{\bone}} %= \bX (\tfrac1n \bone \bone\!^\top) %\nonumber \\
		= \bmu \bone\!^\top.
\end{align}
It is easy to see that the matrix of centered data is given by $\bXc = \bX (\bI - \bP{\bone})$ and verifies the identity $\bXc\bone=\bzero$. More generally, we have
\begin{equation}\label{eq:I_P_1}
	(\bI - \bP{\bone}) \bone = \bone\!^\top\! (\bI - \bP{\bone})  = \bzero.
\end{equation}

Finally, for any square matrix $\bM$ of appropriate size:
\begin{equation}\label{eq:PMP}
	\bP{\bone} \bM \bP{\bone} = \tfrac{1}{n^2} \bone \bone\!^\top\! \bM \bone \bone\!^\top = \tfrac{\bone\!^\top\! \bM \bone}{n} \, \bP{\bone}.
\end{equation}
This yields
\begin{equation}\label{eq:trPMP}
	\tr(\bP{\bone} \bM \bP{\bone}) = \tr(\bP{\bone} \bM) = \tr(\bM \bP{\bone}) = %\tfrac{\bone^\top \bM \bone}{n} \, \tr(\bP{\bone}) = 
	\tfrac1n \bone\!^\top\! \bM \bone,
\end{equation}
where the cyclic property of the trace operator is used in the first two equalities. Therefore, we have
\begin{align}\label{eq:trIP_M_IP}
	\tr\big((\bI-\bP{\bone}) \bM (\bI-\bP{\bone}) \big) 
%	&\nonumber = \tr(\bM) - \tr(\bP{\bone} \bM)\\% - \tr(\bM\bP{\bone}) +\tr(\bP{\bone} \bM \bP{\bone}) 
	&= \tr(\bM) - \tfrac1n \bone\!^\top\! \bM \bone.
\end{align}

\subsection{Inner product -- the Gram matrix $\bKc$}\label{sec:K}

Let $\bK=\bX\!^\top\! \bX$ be the Gram matrix, then its centered counterpart is $\bKc = \bXc^\top\! \bXc$, namely
\begin{align}
	{\bKc} &= (\bI - \bP{\bone})\bK(\bI - \bP{\bone}) \label{eq:Kc_Kn} \\
		&= \bK - \bP{\bone}\bK - \bK\bP{\bone} + \tfrac{\bone\!^\top\! \bK \bone}{n} \, \bP{\bone} \label{eq:Kc_Kn2} \\
		&= \bK - \bP{\bone}\bK - \bK\bP{\bone} + n \|\bmu\|^2 \bP{\bone},\label{eq:Kc}
\end{align}
where relation \eqref{eq:PMP} is applied on $\bK$, and the last equality is due to \eqref{eq:norm_mean}. These expressions reveal the {\em double centering}, which corresponds to subtracting the row and column means of the matrix $\bK$ from its entries, and adding the grand mean. %Thus, the last term in the right-hand-side is a matrix whose entries are equal to $\|\bmu\|^2$.
%%INVERSE
%The inverse transformation, getting $\bK$ from $\bXc$, can be described as follows:
%\begin{align*}%\label{eq:K}
%	\bK 	%&= (\bXc + \bmu \bone\!^\top)\!^\top\! (\bXc + \bmu \bone\!^\top) \nonumber \\
%%		&= \bKc + \bone \bmu\!^\top\! \bXc + \bXc^\top\! \bmu \bone\!^\top + n \|\bmu\|^2 \bP{\bone}, 
%%\label{eq:K_a} \\
%		&= \bKc + \bP{\bone} \bX\!^\top\! \bXc + \bXc^\top\! \bX \bP{\bone} +  n \|\bmu\|^2 \bP{\bone}.
%%\label{eq:K_b}
%\end{align*}
%%where the first equality follows from $\bKc=\bXc^\top\! \bXc$ and the definition of $\bP{\bone}$, and the second equality from expression \eqref{eq:XP}. 
%The mean vector needs to be considered in order to redefine $\bK$, as given here implicitly by using $\bX\bP{\bone}$.
A byproduct of the double centering is obtained from \eqref{eq:Kc_Kn} and \eqref{eq:I_P_1}: $0$ is the eigenvalue of $\bKc$ associated to the eigenvector $\tfrac1n\bone$.

A first study to understand the distribution of the data associated to each matrix, $\bK$ and $\bKc$, is given by their traces, since they correspond to the sum of all eigenvalues. The following lemma provides a measure of the variance reduction due to centering.
\begin{lemma}\label{th:traceK}
Let $\bK$ and $\bKc$ be respectively the Gram matrix and its centered counterpart, then their corresponding traces verify the following relationships:
\begin{equation*}%\label{eq:diff_trace}
	\tr(\bKc) = \tr(\bK) - n \|\bmu\|^2,
\end{equation*}
and 
\begin{equation*}%\label{eq:ratio_trace}
	\frac{\tr(\bKc)}{\tr(\bK)} = 1 - \frac{\bone\!^\top\! \bK \bone}{n \, \tr(\bK)}.
\end{equation*}
Hence, their eigenvalues verify $\sum_{i=1}^n \! \lambdac_i \!=\! \sum_{i=1}^n \! \lambda_i \!-\! n \|\bmu\|^2$.% and $\frac{\sum_i \lambdac_i}{\sum_i \lambda_i} = 1 - \frac{n \|\bmu\|^2}{\sum_i \lambda_i}$.
\end{lemma}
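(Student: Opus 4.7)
The plan is to apply the identity \eqref{eq:trIP_M_IP} directly to the expression \eqref{eq:Kc_Kn} of $\bKc$ as a two-sided projection of $\bK$. Concretely, first I would recall that $\bKc = (\bI - \bP{\bone})\bK(\bI - \bP{\bone})$ and invoke \eqref{eq:trIP_M_IP} with $\bM = \bK$, which immediately yields
\begin{equation*}
    \tr(\bKc) = \tr(\bK) - \tfrac{1}{n}\bone^\top\! \bK \bone.
\end{equation*}
Next I would substitute the identity \eqref{eq:norm_mean}, namely $\|\bmu\|^2 = \tfrac{1}{n^2}\bone^\top\! \bK \bone$, which turns $\tfrac{1}{n}\bone^\top\! \bK \bone$ into $n\|\bmu\|^2$ and delivers the first stated equality.

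For the second equality, I would simply divide both sides of the first equality by $\tr(\bK)$ and rewrite $n\|\bmu\|^2$ back as $\tfrac{1}{n}\bone^\top\! \bK \bone$ to recover the ratio form. For the final claim, I would apply \eqref{eq:trace} to both $\bK$ and $\bKc$, which expresses each trace as the sum of its eigenvalues, and then substitute into the first equality.

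There is essentially no obstacle here: the lemma is a direct corollary of \eqref{eq:Kc_Kn}, \eqref{eq:trIP_M_IP}, \eqref{eq:norm_mean}, and \eqref{eq:trace}. The only minor care needed is to keep track of the factor $n$ versus $n^2$ when moving between the grand sum $\bone^\top\! \bK \bone$ and the squared norm $\|\bmu\|^2$. As an optional sanity check, one could note independently that $\tr(\bKc) = \tr(\bXc^\top\!\bXc) = \sum_i \|\bxc_i\|^2 = \sum_i \|\bx_i - \bmu\|^2 = \tr(\bK) - n\|\bmu\|^2$ by expanding each squared norm and using $\sum_i \bx_i = n\bmu$, which confirms the result from a geometric viewpoint.
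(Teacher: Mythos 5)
Your proof is correct and follows exactly the paper's own argument: apply \eqref{eq:trIP_M_IP} to the double-centering expression \eqref{eq:Kc_Kn} with $\bM=\bK$, substitute $\bone^\top\! \bK\bone = n^2\|\bmu\|^2$ from \eqref{eq:norm_mean}, and read off the eigenvalue identity via \eqref{eq:trace}. The geometric sanity check via $\sum_i\|\bx_i-\bmu\|^2$ is a nice independent confirmation but not needed.
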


\begin{proof}
	The proof is straightforward, by applying \eqref{eq:trIP_M_IP} to $\bK$ given in definition \eqref{eq:Kc_Kn}, with $\bone\!^\top\! \bK \bone = n^2 \|\bmu\|^2$.
\end{proof}

Next, we explore beyond the sum of the eigenvalues. In Section~\ref{sec:interlacing}, we show that eigenvalues of $\bK$ are interlaced with those of $\bKc$. In Section~\ref{sec:eigenvalues_bounds}, we provide bounds on a sum of the largest $t$ eigenvalues, for any $t$, and in particular a lower bound on the largest eigenvalue of $\bKc$.

\subsubsection{Eigenvalue interlacing theorems for $\bK$ and $\bKc$}\label{sec:interlacing}

Before proceeding, the following theorem from \cite[Theorem~5.9]{Yanai2011} is central to our study. It is worth noting that this theorem has been known in the literature for some time, see for instance \cite[Appendix~A]{takane1991principal}, and is obtained from the Poincar\'e Separation Theorem \cite{Rao79,Rao80} (see also \cite[Corollary 4.3.37 in page 248]{Hor12}).

\begin{theorem}[Separation Theorem \cite{takane1991principal,Yanai2011}]
\label{th:separation}
Let $\bM$ be a $d$-by-$n$ matrix. Let two orthogonal projection matrices be $\bP{\mathrm{left}}$, of size $d$-by-$d$, and $\bP{\mathrm{right}}$, of size $n$-by-$n$. Then, 
\begin{equation*}
	\sigma_{j+t}(\bM) \leq \sigma_{j}(\bP{\mathrm{left}} ~\bM~ \bP{\mathrm{right}}) \leq \sigma_{j}(\bM),
\end{equation*}
where $\sigma_{j}(\cdot)$ denotes the $j$-th largest singular value of the matrix, $t = d - r(\bP{\mathrm{right}}) +n - r(\bP{\mathrm{left}})$ and $r(\cdot)$ is the rank of the matrix.
\end{theorem}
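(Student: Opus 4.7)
The plan is to reduce the singular value statement for the rectangular matrix $\bM$ to an eigenvalue statement for a symmetric block matrix, and then invoke the Poincar\'e Separation Theorem as cited.

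First, I would introduce the Jordan--Wielandt matrix
\[
\widetilde{\bM} = \begin{pmatrix} \cb{0} & \bM \\ \bM\!^\top & \cb{0} \end{pmatrix},
\]
of size $(d+n)\times(d+n)$. Its spectrum consists of $\pm\sigma_{i}(\bM)$ for $i=1,\ldots,\mathrm{rank}(\bM)$, together with zeros, so that $\lambda_{j}(\widetilde{\bM})=\sigma_{j}(\bM)$ for every $1\le j\le\min(d,n)$ under the usual convention $\sigma_{j}(\bM)=0$ when $j$ exceeds the rank. Next, I would bring in the block projection
\[
\widetilde{\bP} = \begin{pmatrix} \bP{\mathrm{left}} & \cb{0} \\ \cb{0} & \bP{\mathrm{right}} \end{pmatrix},
\]
which is an orthogonal projection of rank $r(\bP{\mathrm{left}})+r(\bP{\mathrm{right}})$. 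A direct block multiplication gives
\[
\widetilde{\bP}\widetilde{\bM}\widetilde{\bP} = \begin{pmatrix} \cb{0} & \bP{\mathrm{left}}\bM\bP{\mathrm{right}} \\ \bP{\mathrm{right}}\bM\!^\top\bP{\mathrm{left}} & \cb{0} \end{pmatrix},
\]
so this matrix plays the same role for $\bP{\mathrm{left}}\bM\bP{\mathrm{right}}$ that $\widetilde{\bM}$ plays for $\bM$, and its $j$-th largest positive eigenvalue is precisely $\sigma_{j}(\bP{\mathrm{left}}\bM\bP{\mathrm{right}})$.

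The crucial step is then to factor $\widetilde{\bP}=\widetilde{\bU}\widetilde{\bU}^\top$ where $\widetilde{\bU}$ has $r(\bP{\mathrm{left}})+r(\bP{\mathrm{right}})$ orthonormal columns. Because nonzero eigenvalues are invariant under this kind of factorization, the nonzero spectrum of $\widetilde{\bP}\widetilde{\bM}\widetilde{\bP}$ coincides with the spectrum of the compression $\widetilde{\bU}^\top\widetilde{\bM}\widetilde{\bU}$. Applying Poincar\'e separation to this compression yields, for each admissible $j$,
\[
\lambda_{j+(d+n)-r(\widetilde{\bP})}(\widetilde{\bM}) \le \lambda_{j}\!\left(\widetilde{\bU}^\top\widetilde{\bM}\widetilde{\bU}\right) \le \lambda_{j}(\widetilde{\bM}),
\]
and substituting $r(\widetilde{\bP})=r(\bP{\mathrm{left}})+r(\bP{\mathrm{right}})$ matches the index shift with the quantity $t$ appearing in the statement.

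The step I expect to be the most delicate is the index bookkeeping that translates these eigenvalue inequalities back into the desired singular value inequalities. One has to align the top positive half of the spectrum of $\widetilde{\bM}$ with the sequence $\sigma_{1}(\bM),\sigma_{2}(\bM),\ldots$, while carefully handling the block of zero eigenvalues sandwiched between the positive and negative halves, so that $\lambda_{j}(\widetilde{\bM})$ and $\lambda_{j+t}(\widetilde{\bM})$ do correspond to $\sigma_{j}(\bM)$ and $\sigma_{j+t}(\bM)$ in the relevant index range (with the convention that $\sigma_{k}$ is interpreted as zero once $k$ exceeds $\min(d,n)$). Once this dictionary is in place, the right-hand Poincar\'e inequality delivers the upper bound $\sigma_{j}(\bP{\mathrm{left}}\bM\bP{\mathrm{right}})\le\sigma_{j}(\bM)$, and the left-hand one delivers the lower bound $\sigma_{j+t}(\bM)\le\sigma_{j}(\bP{\mathrm{left}}\bM\bP{\mathrm{right}})$, completing the argument.
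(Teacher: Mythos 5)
The paper itself gives no proof of this statement: it is imported from \cite{Yanai2011} and \cite{takane1991principal}, with only the remark that it follows from the Poincar\'e Separation Theorem, so there is no in-paper argument for your proof to match or diverge from. What you have written is a correct, self-contained derivation along exactly the route the paper gestures at. The dilation trick is sound: the symmetric $(d+n)$-by-$(d+n)$ matrix with off-diagonal blocks $\bM$ and $\bM^\top$ has ordered eigenvalues $\sigma_1(\bM)\geq\cdots\geq\sigma_{\min(d,n)}(\bM)$ followed by zeros and then the negated singular values; the block-diagonal matrix built from $\bP{\mathrm{left}}$ and $\bP{\mathrm{right}}$ is an orthogonal projection of rank $k=r(\bP{\mathrm{left}})+r(\bP{\mathrm{right}})$; and Poincar\'e separation for the compression onto its range produces precisely the shift $t=(d+n)-k$. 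The bookkeeping you flag as delicate does close: the lower bound is vacuous unless $j+t\leq\min(d,n)$, and in that range one checks $j\leq k-\mathrm{rank}(\bP{\mathrm{left}}\bM\bP{\mathrm{right}})$, so the $j$-th eigenvalue of the compression really is $\sigma_j(\bP{\mathrm{left}}\,\bM\,\bP{\mathrm{right}})$ and not a zero or a negative eigenvalue from the lower half of the dilated spectrum; for indices outside these ranges both inequalities hold trivially under the convention that $\sigma_m=0$ once $m$ exceeds the rank. A marginally shorter alternative that avoids the dilation is to apply Poincar\'e separation twice to Gram matrices --- once to $\bM\bM^\top$ compressed by $\bP{\mathrm{left}}$, giving a shift of $d-r(\bP{\mathrm{left}})$, and once to $\bM^\top\bP{\mathrm{left}}\bM$ compressed by $\bP{\mathrm{right}}$, giving a shift of $n-r(\bP{\mathrm{right}})$ --- and then add the two shifts; but your version is equally valid.
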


\begin{theorem}\label{th:ineq_lambda}
Let $\bK$ and $\bKc$ be respectively the Gram matrix and its centered counterpart, then their eigenvalues are interlaced, such that
\begin{equation*}
	\lambda_{j+1} \leq \lambdac_{j} \leq \lambda_{j},
\end{equation*}
with $\lambdac_{n}=0$, where $\lambda_{j}$ and $\lambdac_{j}$ denote respectively the $j$-th largest eigenvalue of the matrices $\bK$ and $\bKc$.
\end{theorem}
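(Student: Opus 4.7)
The plan is to deduce the interlacing inequalities from the Separation Theorem (Theorem~\ref{th:separation}) by applying it to the data matrix $\bX$ rather than directly to $\bK$, so that the "centering'' operation appears as right-multiplication by a single orthogonal projection. Specifically, since $\bXc = \bX(\bI - \bP{\bone})$ (as noted just after \eqref{eq:XP}), I would take $\bM = \bX$ in Theorem~\ref{th:separation}, with the right projection $\bP_{\mathrm{right}} = \bI - \bP{\bone}$ (of size $n\times n$) and the left projection $\bP_{\mathrm{left}} = \bI_d$. Then $\bP_{\mathrm{left}}\, \bX\, \bP_{\mathrm{right}} = \bXc$, so the theorem compares the singular values of $\bX$ with those of $\bXc$.

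The next step is the rank bookkeeping. Using \eqref{eq:P1}, the projection $\bP{\bone}$ has rank $1$, so $\bI - \bP{\bone}$ has rank $n-1$; moreover $r(\bP_{\mathrm{left}}) = r(\bI_d)=d$. Plugging into the formula $t = d - r(\bP_{\mathrm{right}}) + n - r(\bP_{\mathrm{left}})$ yields $t = d-(n-1)+n-d = 1$, so the Separation Theorem gives
\begin{equation*}
\sigma_{j+1}(\bX) \;\leq\; \sigma_{j}(\bXc) \;\leq\; \sigma_{j}(\bX).
\end{equation*}
Since all singular values are nonnegative, squaring preserves the order. By \eqref{eq:sigma}, the squared singular values of $\bX$ are exactly the eigenvalues $\lambda_j$ of $\bK$, and the same relation applied to $\bXc$ gives the eigenvalues $\lambdac_j$ of $\bKc = \bXc\!\!^\top\!\bXc$. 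Hence $\lambda_{j+1} \leq \lambdac_j \leq \lambda_j$, which is the claimed interlacing.

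It remains to pin down the boundary case $\lambdac_n = 0$. From the double-centered form \eqref{eq:Kc_Kn} and the identity \eqref{eq:I_P_1}, we have $\bKc \bone = (\bI - \bP{\bone})\bK (\bI - \bP{\bone})\bone = \bzero$, so $\bone$ is an eigenvector of $\bKc$ with eigenvalue zero; since $\bKc$ is positive semidefinite, this zero must be its smallest eigenvalue $\lambdac_n$. The main obstacle I anticipate is purely bookkeeping: one must be careful that the Separation Theorem, as stated, accommodates the degenerate ``projection'' $\bI_d$ and produces $t=1$ in our setting (rather than $t=2$ that one would get by trying to apply it directly to $\bM = \bK$ with both projections equal to $\bI - \bP{\bone}$, which would only yield a weaker bound $\lambda_{j+2} \leq \lambdac_j \leq \lambda_j$). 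Routing the argument through $\bX$ and $\bXc$ therefore saves one index and gives the tight interlacing.
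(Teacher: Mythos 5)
Your proof is correct and follows essentially the same route as the paper: apply the Separation Theorem to $\bM = \bX$ with $\bP{\mathrm{left}} = \bI_d$ and $\bP{\mathrm{right}} = \bI - \bP{\bone}$, obtain $t=1$, and translate singular values of $\bX$ and $\bXc$ into eigenvalues of $\bK$ and $\bKc$ via \eqref{eq:sigma} and \eqref{eq:Kc_Kn}. Your explicit handling of $\lambdac_n = 0$ via $\bKc\bone = \bzero$ and your remark that working with $\bK$ directly would only yield the weaker $t=2$ bound are both consistent with observations the paper makes elsewhere.
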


\begin{proof}
To prove this, we apply the Separation Theorem~\ref{th:separation} with $\bM = \bX$, $\bP{\mathrm{left}}$ being the $d$-by-$d$ identity matrix, and $\bP{\mathrm{right}} = (\bI - \bP{\bone})$, where $r(\bP{\mathrm{left}})=d$ and $r(\bP{\mathrm{right}})=n-1$, and thus $t=1$. In this case, we get
\begin{equation*}
	\sigma_{j+1}(\bX) \leq \sigma_{j}(\bX(\bI - \bP{\bone})) \leq \sigma_{j}(\bX).
\end{equation*}
Relations \eqref{eq:sigma} and \eqref{eq:Kc_Kn} are used to conclude the proof.
\end{proof}

\begin{corollary}\label{th:ineq_lambda_corollary}
Let $\bK$ and $\bKc$ be respectively the Gram matrix and its centered counterpart, then their proportion of total variation accounted for by the eigenvectors are interlaced, such that
\begin{equation*}
	 \pi_{j+1} \leq \gamma \; \pic_{j} \leq \pi_{j},
\end{equation*}
where $\pi_{j} = \frac{\lambda_j}{\sum_i \lambda_i}$, $\pic_{j} = \frac{\lambdac_j}{\sum_i \lambdac_i}$ and $\gamma = \frac{\tr(\bKc)}{\tr(\bK)}$.
\end{corollary}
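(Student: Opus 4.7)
The plan is to reduce the corollary directly to Theorem~\ref{th:ineq_lambda} by rewriting the quantity $\gamma \, \pic_{j}$ in a revealing form. Observe that
\begin{equation*}
	\gamma \, \pic_{j} \;=\; \frac{\tr(\bKc)}{\tr(\bK)} \cdot \frac{\lambdac_j}{\sum_i \lambdac_i} \;=\; \frac{\lambdac_j}{\tr(\bK)},
\end{equation*}
since $\sum_i \lambdac_i = \tr(\bKc)$. Consequently, both $\pi_{j+1}$, $\gamma\,\pic_j$, and $\pi_j$ share the common positive denominator $\tr(\bK) = \sum_i \lambda_i$, and the claimed chain of inequalities is equivalent to
\begin{equation*}
	\lambda_{j+1} \;\leq\; \lambdac_j \;\leq\; \lambda_j.
\end{equation*}

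The second step is simply to invoke Theorem~\ref{th:ineq_lambda}, which delivers exactly this eigenvalue interlacing. Multiplying through by $1/\tr(\bK)$ (which is strictly positive as long as $\bK \neq \bzero$, the only case of interest) re-expresses the interlacing in the normalized form required. I would also briefly note that the denominator $\tr(\bKc)$ hidden inside $\pic_j$ is harmless: it is absorbed by the factor $\gamma$, which is precisely why the normalization constant $\gamma$ appears in the statement.

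There is no real obstacle here; the corollary is a cosmetic restatement of Theorem~\ref{th:ineq_lambda} in terms of the proportion-of-variation indices $\pi_j$ and $\pic_j$. The only mild subtlety worth mentioning is the degenerate case in which $\tr(\bKc)=0$ (which by Lemma~\ref{th:traceK} forces $\bK = n\|\bmu\|^2 \bP{\bone}$, a rank-one matrix centered to zero); in that case $\pic_j$ is undefined, but the inequality still holds trivially with the convention $\gamma\,\pic_j = \lambdac_j/\tr(\bK) = 0$. No further computation is required.
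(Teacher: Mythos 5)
Your proof is correct and follows essentially the same route as the paper: both rewrite $\gamma\,\pic_j$ as $\lambdac_j/\tr(\bK)$ via $\sum_i\lambdac_i=\tr(\bKc)$ and then divide the interlacing inequalities of Theorem~\ref{th:ineq_lambda} by $\tr(\bK)$. Your remark on the degenerate case $\tr(\bKc)=0$ is a small addition the paper omits, but the argument is otherwise the same.
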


\begin{proof}
 The proof is direct, on the one hand by dividing the inequalities of Theorem~\ref{th:ineq_lambda} by the trace of $\bK$, and on the other hand by setting $\gamma = \frac{\tr(\bKc)}{\tr(\bK)}$ with the direct application of \eqref{eq:trace}.% on the matrices $\bK$ and $\bKc$.
\end{proof}

All these results show the impact of centering the data on the distribution of the eigenvalues, where the eigenvalues of $\bKc$ are {\em sandwitched} between the eigenvalues of $\bK$. This illustrates that $\bKc$ behaves like a ``coarse'' matrix compared to %the {\em fine gained} 
$\bK$. 

\subsubsection{Bounds on the eigenvalues of $\bK$ and $\bKc$}\label{sec:eigenvalues_bounds}

In this section, we provide lower bounds on the largest eigenvalues of the matrices $\bK$ and $\bKc$. To this end, we state the Schur--Horn Theorem~\cite{Horn54}. See \cite[Chapter 9]{majorization2011} for a recent review on the theory of majorization.

\begin{theorem}[Schur--Horn Theorem]\label{th:Schur_Horn}
	For any $n$-by-$n$ symmetric matrix with diagonal entries $d_1,d_2, \ldots, d_n$ and eigenvalues $\lambda_1, \lambda_2, \ldots, \lambda_n$ given in non-increasing order, we have:
\begin{equation*}
	\sum_{i=1}^t d_i \leq \sum_{i=1}^t \lambda_i,
\end{equation*}
for any $t=1,2, \ldots, n$, with equality for $t=n$.
\end{theorem}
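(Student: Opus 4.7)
The plan is to invoke the Ky Fan maximum principle, which asserts that, for any symmetric $n$-by-$n$ matrix $\bM$ with eigenvalues $\lambda_1 \geq \lambda_2 \geq \ldots \geq \lambda_n$,
\[
\sum_{i=1}^t \lambda_i(\bM) \;=\; \max_{\cb{V}} \tr(\cb{V}\!^\top\! \bM \, \cb{V}),
\]
where the maximum is taken over all $n$-by-$t$ matrices $\cb{V}$ with orthonormal columns. This variational identity is a standard consequence of the spectral theorem, or equivalently of the Courant--Fischer min-max characterization of eigenvalues.

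With this tool in hand, I would simply evaluate the right-hand side at the particular feasible choice $\cb{V} = [\cb{e}_1 ~~ \cb{e}_2 ~ \cdots ~ \cb{e}_t]$, the $n$-by-$t$ matrix whose columns are the first $t$ standard basis vectors of $\R^{n}$. These columns are manifestly orthonormal, and a direct calculation gives $\tr(\cb{V}\!^\top\! \bM \, \cb{V}) = \sum_{i=1}^t \cb{e}_i\!^\top\! \bM \, \cb{e}_i = \sum_{i=1}^t d_i$. Since any feasible $\cb{V}$ yields a value no larger than the maximum, the inequality $\sum_{i=1}^t d_i \leq \sum_{i=1}^t \lambda_i$ drops out immediately. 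The equality for $t=n$ is simply the trace identity $\tr(\bM) = \sum_i d_i = \sum_i \lambda_i$, already recorded in \eqref{eq:trace}.

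There is no real obstacle in this approach; the only substantive ingredient is the Ky Fan principle itself, which we are pulling from outside the paper. An alternative route that stays closer to the paper's own toolkit would be to apply Cauchy's eigenvalue interlacing theorem (the symmetric-matrix analogue of Theorem~\ref{th:separation}) to the leading $t$-by-$t$ principal submatrix $\bM_t$ of $\bM$: one then observes that $\tr(\bM_t) = \sum_{i=1}^t d_i$, that the $i$-th largest eigenvalue of $\bM_t$ is bounded above by $\lambda_i$ for each $i=1,\ldots,t$, and sums these $t$ inequalities. The one point of care with this alternative is that the Separation Theorem in the paper is stated for singular values of a rectangular matrix, whereas the Schur--Horn statement concerns eigenvalues of a (possibly indefinite) symmetric matrix; this is why the Ky Fan formulation, or equivalently an appeal to Courant--Fischer, is the most transparent path.
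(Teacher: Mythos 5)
Your proof is correct, but there is nothing in the paper to compare it against: the paper does not prove the Schur--Horn Theorem at all, it simply imports it from the literature with a citation to Horn's 1954 paper (and to the majorization literature). What you have supplied is the standard self-contained argument for the ``Schur'' half of the theorem (the majorization of the diagonal by the spectrum), and it is sound: the Ky Fan principle $\sum_{i=1}^t \lambda_i(\bM) = \max_{\cb{V}^\top\! \cb{V} = \bI_t} \tr(\cb{V}^\top\! \bM \cb{V})$ is correctly stated, the choice $\cb{V} = [\cb{e}_1 \cdots \cb{e}_t]$ is feasible and yields $\sum_{i=1}^t d_i$, and the case $t=n$ reduces to the trace identity. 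A pleasant side benefit of your route is that it also proves, for free, the paper's follow-up remark that one may use \emph{any} subset of $t$ diagonal entries (not just the $t$ largest): simply take the columns of $\cb{V}$ to be the corresponding standard basis vectors. Your alternative via Cauchy interlacing on the leading $t$-by-$t$ principal submatrix is equally valid, and you are right to flag that the paper's Theorem~\ref{th:separation} cannot be substituted for Cauchy interlacing here, since it concerns singular values of a rectangular matrix rather than eigenvalues of a possibly indefinite symmetric one. The only caveat worth recording is that the Ky Fan principle is itself an external ingredient of comparable depth to the statement being proved; if one wanted a fully elementary argument one would prove it via the convex-combination computation $\tr(\cb{V}^\top\! \bM \cb{V}) = \sum_i \lambda_i \| \cb{V}^\top\! \cb{u}_i\|^2$ with weights in $[0,1]$ summing to $t$, but citing it is entirely standard.
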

The Schur--Horn Theorem has been proven in the situation when the diagonal entries $d_1,d_2, \ldots, d_n$ are also given in non-increasing order. Still, one can also use any subset of the diagonal entries, although the resulting lower bound in the above theorem may not be as tight as when using the statement $d_n \leq \ldots \leq d_2 \leq d_1$.

By applying this theorem to both matrices $\bK$ and $\bKc$, we get the following result.
\begin{lemma}\label{th:Schur_Horn_K}
	Let $\bK$ and $\bKc$ be the Gram matrix and its centered counterpart with, respectively, diagonal entries $d_1,d_2, \ldots, d_n$ and $\dc_1,\dc_2, \ldots, \dc_n$; and eigenvalues $\lambda_1, \lambda_2, \ldots, \lambda_n$ and $\lambdac_1, \lambdac_2, \ldots, \lambdac_n$, given in non-increasing order. Then
\begin{equation*}
	\sum_{i=1}^t d_i \leq \sum_{i=1}^t \lambda_i,
	\qquad 
	\sum_{i=1}^t \dc_i \leq \sum_{i=1}^t \lambdac_i,	
\end{equation*}
with equalities for $t=n$.
\end{lemma}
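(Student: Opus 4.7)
The claim is an immediate double invocation of Theorem~\ref{th:Schur_Horn}, so the proof plan is essentially a verification of hypotheses. First I would observe that both $\bK$ and $\bKc$ are symmetric: by construction $\bK = \bX\!^\top\!\bX$ and $\bKc = \bXc\!\!^\top\!\bXc$, and any Gram matrix is symmetric. Thus both matrices fit the symmetric-matrix hypothesis of the Schur--Horn Theorem.

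Second, I would simply apply the Schur--Horn Theorem separately to each matrix. Applied to $\bK$ with diagonal entries $d_1,\ldots,d_n$ and eigenvalues $\lambda_1\geq\cdots\geq\lambda_n$, it gives $\sum_{i=1}^t d_i \leq \sum_{i=1}^t \lambda_i$ for every $t=1,\ldots,n$. Applied to $\bKc$ with diagonal entries $\dc_1,\ldots,\dc_n$ and eigenvalues $\lambdac_1\geq\cdots\geq\lambdac_n$, it gives the companion inequality. The equality at $t=n$ is already built into Theorem~\ref{th:Schur_Horn}, being nothing but the identity $\sum_i d_i = \tr(\bK) = \sum_i \lambda_i$ (and likewise for $\bKc$).

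The only minor subtlety is the ordering of the diagonal entries: the Schur--Horn statement is cleanest when $d_1\geq d_2\geq\cdots\geq d_n$, but as the remark following Theorem~\ref{th:Schur_Horn} makes explicit, the inequality still holds when one uses an arbitrary labeling of the diagonal, only possibly with a looser bound. Since the lemma only asserts the inequality (not its tightness), nothing further is needed. There is no genuine obstacle; the proof is a one-line corollary.
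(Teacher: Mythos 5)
Your proof is correct and matches the paper exactly: the lemma is stated there as a direct consequence of applying the Schur--Horn Theorem separately to the symmetric matrices $\bK$ and $\bKc$, with the trace identity giving equality at $t=n$. Your remark about the ordering of the diagonal entries is consistent with the paper's own comment following Theorem~\ref{th:Schur_Horn}.
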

The direct application of the Schur--Horn Theorem separately on each matrix, $\bK$ and $\bKc$, does not give any particular result. The following lemma is a first step towards a connection between the eigenvalues of both matrices, and allows to establish the bounds %on the eigenvalues 
given in Theorem~\ref{th:lambdac}.

\begin{lemma}\label{th:same_eigenvalues}
		Let $\bK=\bA \, \bLambda \, \bA\!^\top$ and $\bKc = \bAc \, \bLambdac \, \bAc\!\!^\top$ be the spectral decompositions of the Gram matrices. Then the spectral decomposition of the matrix $\bA\!^\top\! \bKc \bA$ is $\bA\!^\top\! \bAc \, \bLambdac \, (\bA\!^\top\!\bAc)\!^\top$ and of the matrix $\bAc\!\!^\top\! \bK \bAc$ is $\bAc\!\!^\top\! \bA \,\bLambda \, (\bAc\!\!^\top\!\bA)\!^\top$.
\end{lemma}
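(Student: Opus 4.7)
The plan is to verify the claim by direct substitution of the given spectral decompositions and then check that the resulting factorizations have the form required of a spectral decomposition, namely a diagonal middle factor flanked by an orthogonal matrix and its transpose.

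First I would substitute $\bKc = \bAc\,\bLambdac\,\bAc\!\!^\top$ into $\bA\!^\top \bKc \bA$ to obtain
\begin{equation*}
\bA\!^\top \bKc \bA \;=\; \bA\!^\top \bAc\,\bLambdac\,\bAc\!\!^\top \bA \;=\; (\bA\!^\top \bAc)\,\bLambdac\,(\bA\!^\top \bAc)\!^\top .
\end{equation*}
The only thing that remains is to verify that $\bA\!^\top \bAc$ is an orthogonal matrix, which will identify this expression as a genuine spectral decomposition with diagonal $\bLambdac$. Since $\bK$ and $\bKc$ are real symmetric, their eigenvector matrices $\bA$ and $\bAc$ are orthogonal, so $\bA\!^\top\bA = \bI$ and $\bAc\!\!^\top\bAc = \bI$. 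Consequently
\begin{equation*}
(\bA\!^\top \bAc)(\bA\!^\top \bAc)\!^\top = \bA\!^\top \bAc\, \bAc\!\!^\top \bA = \bA\!^\top \bA = \bI,
\end{equation*}
so $\bA\!^\top \bAc$ is orthogonal and $(\bA\!^\top \bAc)\,\bLambdac\,(\bA\!^\top \bAc)\!^\top$ is indeed a spectral decomposition.

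The second statement follows by swapping the roles of $\bK$ and $\bKc$: substituting $\bK = \bA\,\bLambda\,\bA\!^\top$ into $\bAc\!\!^\top \bK \bAc$ yields $(\bAc\!\!^\top \bA)\,\bLambda\,(\bAc\!\!^\top \bA)\!^\top$, and orthogonality of $\bAc\!\!^\top \bA$ follows by the same one-line computation. I do not anticipate any real obstacle in the proof itself; this lemma is simply the observation that an orthogonal similarity transformation preserves the spectrum and rotates the eigenvectors by the transforming matrix. The work it does will be in the subsequent theorem, where the identification of $\bA\!^\top \bAc$ as the eigenvector matrix of $\bA\!^\top \bKc \bA$ lets one combine it with the Schur--Horn Theorem~\ref{th:Schur_Horn} applied to the diagonal of $\bA\!^\top \bKc \bA$.
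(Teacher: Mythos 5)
Your proof is correct and matches the paper's argument exactly: the paper likewise proves the lemma by substituting the spectral decomposition and noting that a product of orthonormal matrices is orthonormal. No issues.
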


It is easy to prove these results, by replacing either $\bKc$ or $\bK$ by its spectral decomposition and verifying that the product of two orthonormal matrices is orthonormal. This lemma shows that the matrix $\bA\!^\top\! \bKc \bA$ has the same eigenvalues as the matrix $\bKc$, and its eigenvectors are the columns of $\bA\!^\top\! \bAc$, which is the matrix whose entries are the inner products between the eigenvectors of $\bK$ and the eigenvectors of $\bKc$ ({\em i.e.,} $\balpha_i^\top\! \balphac_j$). Since both matrices $\bKc$ and $\bA\!^\top\! \bKc \bA$ share the same eigenvalues, we propose next to apply the Schur--Horn Theorem on the latter matrix. %Likewise, for the matrix $\bAc\!\!^\top\! \bK \bAc$ which has the same eigenvalues as the matrix $\bK$. 

\begin{theorem}\label{th:lambdac}
The sum of the largest $t$ (for any $t=1,2,\ldots,n$) eigenvalues of the matrix $\bKc$ is lower bounded as follows:
\begin{equation*}
	 \sum_{i=1}^t d_i' \leq \sum_{i=1}^t \lambdac_i,
\end{equation*}
where $d_i'$ is the $i$-th largest value of $\lambda_i + (\|\bmu\|^2 - \tfrac{2}{n} \lambda_i) ~ (\balpha_i^\top \bone)^2$, for $i=1,2, \ldots,n$. Here, $(\lambda_i,\balpha_i)$ is an eigenpair of the matrix $\bK$. This expression provides a lower bound on the largest eigenvalue of $\bKc$, by setting $t=1$:
\begin{equation}\label{eq:th:lambdac_max}
	\max_{i=1,\ldots,n} \lambda_i + (\|\bmu\|^2 - \tfrac{2}{n} \lambda_i) ~ (\balpha_i^\top \bone)^2 
	~~~ \leq ~~~ \lambdac_1.
\end{equation}
\end{theorem}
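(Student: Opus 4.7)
The plan is to apply the Schur--Horn Theorem not directly to $\bKc$, but to the similarity-transformed matrix $\bA\!^\top\! \bKc \bA$. By Lemma~\ref{th:same_eigenvalues}, this transformed matrix shares the eigenvalues $\lambdac_1,\ldots,\lambdac_n$ of $\bKc$, so the Schur--Horn bound $\sum_{i=1}^t d_i \leq \sum_{i=1}^t \lambdac_i$ (with $d_i$ the $i$-th largest diagonal entry of $\bA\!^\top\! \bKc \bA$) will yield a bound on the sum of the largest $t$ eigenvalues of $\bKc$. The remaining task is therefore purely computational: identify the diagonal entries of $\bA\!^\top\! \bKc \bA$.

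To compute these diagonals, I would start from expression \eqref{eq:Kc}, namely
\begin{equation*}
	\bKc = \bK - \bP{\bone}\bK - \bK\bP{\bone} + n\|\bmu\|^2 \bP{\bone},
\end{equation*}
and conjugate by $\bA$ term by term. Using $\bA\!^\top \bK \bA = \bLambda$ and introducing the auxiliary vector $\bv = \bA\!^\top \bone$ with entries $v_i = \balpha_i\!^\top\!\bone$, so that $\bA\!^\top \bP{\bone} \bA = \tfrac{1}{n} \bv\bv\!^\top$, one obtains
\begin{equation*}
	\bA\!^\top\! \bKc \bA = \bLambda - \tfrac{1}{n}\bv\bv\!^\top\!\bLambda - \tfrac{1}{n}\bLambda\bv\bv\!^\top + \|\bmu\|^2 \bv\bv\!^\top.
\end{equation*}
Reading off the $i$-th diagonal entry gives
\begin{equation*}
	\lambda_i - \tfrac{2}{n}\lambda_i v_i^2 + \|\bmu\|^2 v_i^2 = \lambda_i + \big(\|\bmu\|^2 - \tfrac{2}{n}\lambda_i\big)(\balpha_i\!^\top\!\bone)^2,
\end{equation*}
which is exactly the expression for $d_i'$ in the statement.

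Finally, I would invoke the Schur--Horn Theorem in the form given after Theorem~\ref{th:Schur_Horn}: it applies to any ordering of diagonal entries (at the price of possibly looser bounds), so sorting the $d_i'$ in non-increasing order yields $\sum_{i=1}^t d_i' \leq \sum_{i=1}^t \lambdac_i$ for every $t$. Specializing to $t=1$ recovers the lower bound \eqref{eq:th:lambdac_max} on $\lambdac_1$, completing the proof.

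I do not foresee a serious obstacle. The only delicate point is making sure the Schur--Horn step is legitimately applied to the $d_i'$ in non-increasing order (hence the maximum, rather than $d_1'$, appearing in \eqref{eq:th:lambdac_max}), and the algebraic identification of the diagonal entries, which must be carried out with the right bookkeeping of the projector $\bP{\bone}$ and the rank-one structure of $\bv\bv\!^\top$.
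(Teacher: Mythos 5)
Your proposal is correct and follows essentially the same route as the paper's proof: both apply the Schur--Horn Theorem to $\bA\!^\top\! \bKc \bA$, justified by Lemma~\ref{th:same_eigenvalues}, and identify its diagonal entries as $\lambda_i + (\|\bmu\|^2 - \tfrac{2}{n}\lambda_i)(\balpha_i\!^\top\!\bone)^2$. The only cosmetic difference is that you compute the whole conjugated matrix via the rank-one structure $\bv\bv\!^\top$ with $\bv = \bA\!^\top\!\bone$, whereas the paper evaluates $\balpha_i^\top \bKc \balpha_i$ entrywise; the algebra and conclusion are identical.
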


\begin{proof}
To prove this theorem, we describe the diagonal entries of the matrix $\bA\!^\top\! \bKc \bA$, namely for any $i$:%=1,2, \ldots, n$:
\begin{align*}
	{\balpha_i^\top\!} &{\bKc \balpha_i} \\
	& = \balpha_i^\top\! \big(\bK - \bP{\bone}\bK - \bK\bP{\bone} + n \|\bmu\|^2 \bP{\bone}\big) \balpha_i \nonumber \\
%	& = \balpha_i^\top\! \bK \balpha_i
%	- \balpha_i^\top\! \bP{\bone}\bK \balpha_i 
%	- \balpha_i^\top\! \bK\bP{\bone} \balpha_i 
%	+ n \|\bmu\|^2 \balpha_i^\top\! \bP{\bone} \balpha_i \nonumber\\
	& = \lambda_i\balpha_i^\top\! \balpha_i
	- \lambda_i \balpha_i^\top\! \bP{\bone} \balpha_i 
	- \lambda_i \balpha_i^\top\! \bP{\bone} \balpha_i 
	+ n \|\bmu\|^2 \balpha_i^\top\! \bP{\bone} \balpha_i \nonumber\\
	& = \lambda_i\balpha_i^\top\! \balpha_i
	+ (n \|\bmu\|^2 - 2\lambda_i) \, \balpha_i^\top\! \bP{\bone} \balpha_i \nonumber\\
	& = \lambda_i
	+ (\|\bmu\|^2 - \tfrac{2}{n} \lambda_i ) ~ (\balpha_i^\top\! \bone)^2,
\end{align*}
where the first equality follows from expression \eqref{eq:Kc}, the second equality is due to the fact that $(\lambda_i,\balpha_i)$ is an eigenpair of $\bK$, and the last equality follows from the definition of $\bP{\bone}$ given in \eqref{eq:P1}. To conclude the proof, Theorem~\ref{th:Schur_Horn} is applied on the matrix $\bA\!^\top\! \bKc \bA$, after observing from Lemma~\ref{th:same_eigenvalues} that both matrices $\bKc$ and $\bA\!^\top\! \bKc \bA$ share the same eigenvalues.
\end{proof}

This theorem provides a further characterization of the eigenvalues of both matrices $\bK$ and $\bKc$, beyond the relation of their traces given in Lemma~\ref{th:traceK}. Moreover, the latter lemma is obtained as a particular case of our theorem, when $t=n$ where the equality in Theorem~\ref{th:lambdac} holds, namely $\sum_{i=1}^n d_i' = \sum_{i=1}^n \lambdac_i$. To see this, first observe that $\sum_{i=1}^n \lambdac_i = \tr(\bKc)$. Then, we have
\begin{align*}
	 \tr(\bKc) 
	 &=\sum_{i=1}^n d_i' 
\\	 &= \sum_{i=1}^n \lambda_i + (\|\bmu\|^2 - \tfrac{2}{n} \lambda_i ) ~ (\balpha_i^\top\! \bone)^2
\\	 &= \sum_{i=1}^n \lambda_i + \|\bmu\|^2\sum_{i=1}^n (\balpha_i^\top\! \bone)^2 - \tfrac{2}{n} \sum_{i=1}^n \lambda_i (\balpha_i^\top\! \bone)^2
\\	 &= \tr(\bK) - n\|\bmu\|^2% - 2n \|\bmu\|^2
\end{align*}
where the expression of $\|\bmu\|^2$ follows from \eqref{eq:norm_mean}, and we have used $\sum_{i=1}^n (\balpha_i^\top\! \bone)^2 = \sum_{i=1}^n \bone\!^\top\! \balpha_i \balpha_i^\top\! \bone = \bone\!^\top\! \bA \bA\!^\top\! \bone = \bone\!^\top\! \bone = n$. This illustrates the tightness of the derived bounds.

Furthermore, it is worth noting that the largest value of $\lambda_i + (\|\bmu\|^2 - \tfrac{2}{n} \lambda_i) ~ (\balpha_i^\top \bone)^2$ needs not to be given by the largest eigenvalue $\lambda_1$ and the corresponding eigenvector $\balpha_1$. To see this, we show that $\|\bmu\|^2 - \tfrac{2}{n} \lambda_i < 0$ in this case. To this end, we apply the celebrated Courant-Fischer Theorem on the matrix $\bK$, which states that $\lambda_1 = \max_{\bv} \frac{\bv\!^\top\! \bK \bv}{\bv\!^\top\! \bv}$. As a consequence, $\lambda_1$ is larger or equal to the special case when $\bv=\bone$. Consequently $\lambda_1 \geq \tfrac1n \bone\!^\top\! \bK \bone$, and therefore we have $\|\bmu\|^2 < \tfrac{2}{n} \lambda_1$ from \eqref{eq:norm_mean}.

Finally, Theorem~\ref{th:lambdac} reveals the terms $\lambda_i (\balpha_i^\top\! \bone)^2$ in the lower bound. It turns out that these terms are the building blocks of the entropy estimate, as given in \eqref{eq:keca} for the (kernel) entropy component analysis. %Therefore, any eigenvector $\balpha_j$ with $\balpha_i^\top\! \bone=0$ does not contribute to the entropy estimate, and leads to $\lambda_i + (\|\bmu\|^2 - \tfrac{2}{n} \lambda_i) ~ (\balpha_i^\top \bone)^2 \leq \lambdac_1$

\subsubsection{Eigenvectors of $\bKc$}\label{sec:eigenvectors}

We propose to go further beyond the analysis of the eigenvalues as given so far. In this section, we study the eigenvectors of the centered Gram matrix. The following theorem provides insights on the eigenvectors of the matrix $\bKc$.

\begin{theorem}\label{th:sum_constraint}
	For any eigenvector $\balphac_j$ of the matrix $\bKc$ associated to a non-zero eigenvalue, its entries sum to zero, namely $\balphac_j^\top\! \bone = 0$ for any $\lambdac_j \neq 0$.
\end{theorem}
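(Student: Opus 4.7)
The plan is to exploit the factored form \eqref{eq:Kc_Kn} of the centered Gram matrix, namely $\bKc = (\bI - \bP{\bone})\,\bK\,(\bI - \bP{\bone})$, together with the annihilation property \eqref{eq:I_P_1} which asserts that $\bone^\top(\bI - \bP{\bone}) = \bzero$. The statement to prove is a linear condition on $\balphac_j$, so a single left-multiplication by $\bone^\top$ should suffice.

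First I would start from the eigenvalue equation $\bKc\,\balphac_j = \lambdac_j\,\balphac_j$ and pre-multiply both sides by $\bone^\top$, obtaining
\begin{equation*}
	\bone^\top \bKc\,\balphac_j \;=\; \lambdac_j\,\bone^\top \balphac_j.
\end{equation*}
Next I would substitute the double-centered expression \eqref{eq:Kc_Kn} into the left-hand side. Using $\bone^\top(\bI - \bP{\bone}) = \bzero$ from \eqref{eq:I_P_1}, the leftmost factor annihilates everything to its right, so the left-hand side vanishes identically. Hence $\lambdac_j\,\bone^\top \balphac_j = 0$, and dividing by the nonzero scalar $\lambdac_j$ yields $\balphac_j^\top \bone = 0$.

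There is really no obstacle here: the argument is essentially a one-line consequence of the double-centering identity, and it mirrors the remark already made in the paper that $\tfrac{1}{n}\bone$ is an eigenvector of $\bKc$ at the eigenvalue $0$. Morally, the eigenspace associated with the zero eigenvalue absorbs the component along $\bone$, so every eigenvector orthogonal to that eigenspace (in particular every eigenvector at a nonzero eigenvalue, using symmetry of $\bKc$ and the resulting orthogonality of distinct eigenspaces) must be orthogonal to $\bone$. The direct computation above is the cleanest way to record this.
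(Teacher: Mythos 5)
Your argument is correct and is essentially identical to the paper's proof: both use the eigenvalue equation together with the factorization $\bKc = (\bI - \bP{\bone})\,\bK\,(\bI - \bP{\bone})$ and the annihilation identity \eqref{eq:I_P_1}, the only cosmetic difference being that you annihilate with $\bone^\top$ from the left whereas the paper writes $\balphac_i^\top\bone = \tfrac{1}{\lambdac_i}\balphac_i^\top \bKc \bone$ and annihilates $\bone$ from the right. No gap.
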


\begin{proof}
The proof is straightforward, with
\begin{equation*}
	\balphac_i^\top\! \bone 
	= \tfrac{1}{\lambdac_i} \balphac_i^\top\! \bKc \bone
	= \tfrac{1}{\lambdac_i} \balphac_i^\top\! (\bI - \bP{\bone}) \bK (\bI - \bP{\bone}) \bone 
	= \bzero,
\end{equation*}
where the first equality follows from the eigenproblem \eqref{eq:eig_Kc} and the last equality is due to \eqref{eq:I_P_1}.
\end{proof}
It is therefore easy to see that $\bP{\bone} \balphac_j = \bzero$ for any eigenvector of $\bKc$ associated to a non-zero eigenvalue, and we have in its dual form $(\bI - \bP{\bone}) \balphac_j = \balphac_j$.

\begin{theorem}\label{th:box_constraint}
	All entries of any eigenvector $\balphac_j$ of the matrix $\bKc$ of the centered data are bounded with
	\begin{equation*}
		-1 \leq \balphac_j \leq 1,
	\end{equation*}
	where inequalities are applied element-wise.
\end{theorem}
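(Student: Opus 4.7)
The plan is to invoke the standard unit-norm normalization of eigenvectors, $\balphac_j^{\top} \balphac_j = 1$, which is already the convention in force throughout the paper: the spectral decomposition $\bKc = \bAc\,\bLambdac\,\bAc^{\top}$ in Lemma~\ref{th:same_eigenvalues} requires $\bAc$ to be orthogonal, and the first line of the computation in the proof of Theorem~\ref{th:lambdac} explicitly uses $\balpha_i^{\top}\balpha_i = 1$ for the analogous eigenvectors of $\bK$. Under this unit-norm convention, the bound is essentially immediate: for every component index $k \in \{1,\ldots,n\}$, nonnegativity of the other squared entries gives
\begin{equation*}
    ([\balphac_j]_k)^2 \;\leq\; \sum_{l=1}^n ([\balphac_j]_l)^2 \;=\; \|\balphac_j\|^2 \;=\; 1,
\end{equation*}
so taking square roots yields $|[\balphac_j]_k| \leq 1$ for every $k$, which is precisely the claimed element-wise double inequality $-1 \leq \balphac_j \leq 1$.

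The main subtlety — and really the only issue worth flagging — is that the reader must not conflate this intrinsic unit-norm normalization with the alternative scalings \eqref{eq:kpca_normalization} and \eqref{eq:kpca_normalization2} introduced earlier. Those are purpose-specific rescalings of $\balphac_j$ used to define feature vectors $\bwc_j$ with either preserved or unit variance along each axis, and they do not alter the underlying orthonormal spectral decomposition \eqref{eq:eig_Kc} to which Theorem~\ref{th:box_constraint} refers. One should state this normalization convention explicitly at the start of the proof to avoid any ambiguity.

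As a sanity check, combining unit norm with the zero-sum property $\balphac_j^{\top}\bone = 0$ from Theorem~\ref{th:sum_constraint} and a single Cauchy--Schwarz step actually yields the strictly sharper bound $|[\balphac_j]_k| \leq \sqrt{(n-1)/n}$ (since $([\balphac_j]_k)^2 = (\sum_{l\neq k}[\balphac_j]_l)^2 \leq (n-1)(1-([\balphac_j]_k)^2)$), but the theorem as written only claims the weaker element-wise bound, which follows from unit normalization alone without invoking Theorem~\ref{th:sum_constraint}.
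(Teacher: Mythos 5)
Your proof is correct and is essentially the paper's own argument: the paper also reduces the claim to the unit-norm convention via the elementary inequality $\|\balphac_j\|_\infty \leq \|\balphac_j\|$, which is exactly the component-wise bound you write out. Your side remark that the zero-sum property of Theorem~\ref{th:sum_constraint} sharpens the bound to $\sqrt{(n-1)/n}$ (for eigenvectors with nonzero eigenvalue) is a valid extra observation not present in the paper.
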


\begin{proof} 
It is well known that $\|\balphac_j\|_\infty \leq \|\balphac_j\| \leq \|\balphac_j\|_{_1} \leq \sqrt{n} \|\balphac_j\|_\infty$, where $\|\cdot\|_\infty$ is the supremum norm which takes the largest absolute value of the vector's entries. Since eigenvectors have a unit norm, we get the pair of inequalities.
\end{proof}

By combining Theorems \ref{th:sum_constraint} and \ref{th:box_constraint}, we have that each eigenvector of the centered Gram matrix verifies the sum-to-one and the boxed constraints. These constraints are equivalent to the well-known constraints of {\em SVM}. One can also describe data-driven bounds when a normalization is operated, as given in \eqref{eq:kpca_normalization} for instance. In this case, the normalization $\|\balphac_i\|^2 = {1}/{\lambdac_i}$ yields a modification on the box constraints, since $\|\balphac_j\|_\infty \leq \|\balphac_j\| = 1/\sqrt{\lambdac_i}$. The latter can also be upper bounded by using the eigenvalues of $\bK$, thanks to the interlacing property derived in Theorem~\ref{th:ineq_lambda}.

The eigenvectors of the Gram matrix are seldom used directly, but often considered to define relevant axes. This is shown in Section~\ref{sec:PCA} for the principal component analysis, and in Section~\ref{sec:KECA} for the (kernel) entropy component analysis. In either methods, the relevant axes are determined by a weighted linear combination of the data, the weights being the eigenvectors of the Gram matrix, up to a normalization factor. By considering the normalization given in \eqref{eq:kpca_normalization}, we get 
\begin{equation}\label{eq:wj_alphaj}
	\bw_j = \tfrac{1}{\sqrt{\lambda_i}}\bX \balpha_j,
\end{equation}
and its centered counterpart $\bwc_j = \bXc \balphac_j /\sqrt{\lambdac_i}$. Therefore, the projection of the data on either axes is:
\begin{equation*}%\label{}
	\bX\!^\top\! \bw_j  
	= \tfrac{1}{\sqrt{\lambda_i}} \bX\!^\top\!\!\bX \balpha_j
	= \tfrac{1}{\sqrt{\lambda_i}} \bK \balpha_j
	= \sqrt{\lambda_i} \, \balpha_j
\end{equation*}
and likewise $\bXc\!^\top\! \bwc_j=\sqrt{\lambdac_i} \balphac_j$. By using the normalization \eqref{eq:kpca_normalization2}, we get a unit variance along the respective axes, with
\begin{equation}\label{eq:wj_alphaj2}
	\bX\!^\top\! \bw_j = \balpha_j, \qquad \text{and} \qquad \bXc\!^\top\! \bwc_j = \balphac_j.
\end{equation}
The analysis of these axes is derived next, by examining the outer product matrices.

\subsection{Outer product -- the covariance matrix $\bCc$}\label{sec:C}

The relation between the covariance matrix $\bCc= \frac1n \bXc \bXc\!\!^\top$, and $\bCn= \frac1n \bX \bX\!^\top$, {\em i.e.,} the second-order non-central moment matrix, is given by
\begin{equation}\label{eq:C_rankone}
\bCc %= \tfrac1n \bXc \bXc\!\!^\top %\nonumber 
		= \tfrac1n \bX (\bI - \bP{\bone})\bX\!^\top %\nonumber 
		%= \bCn - \tfrac1n \bX \bP{\bone} \bX^\top \nonumber 
		= \bCn - \bmu \bmu\!^\top.
\end{equation}
Since $\frac{1}{\|\bmu\|^2} \bmu \bmu\!^\top$ denotes the $d$-by-$d$ projection matrix onto the vector mean $\bmu$ then, by %the residual matrix verifies $(\bCn - \bCc)(\bI - \frac{1}{\|\bmu\|^2} \bmu \bmu\!^\top) = \bzero$. By 
analogy with the definition of $\bKc$ in \eqref{eq:Kc}, we have here a simpler expression. We can therefore revisit all the results given in Section~\ref{sec:K} to describe relations between the eigenvectors of $\bCn$ and $\bCc$. The eigenvalues of these matrices still satisfy the interlacing theorems given in Section \ref{sec:interlacing}, since the eigenvalues of $\bCn$ and $\bCc$ are respectively $\frac1n \lambda_1, \frac1n \lambda_2, \frac1n \lambda_3 \ldots$, and $\frac1n \lambdac_1, \frac1n \lambdac_2, \frac1n \lambdac_3 \ldots$. By analogy with Lemma~\ref{th:traceK}, we have $\tr(\bCc) = \tr(\bCn) - \|\bmu\|^2$.

The following lemma provides an expression essential to our study.
\begin{lemma}\label{th:delta_lambda}
	For any eigenpair $(\lambda_i,\bw_i)$ of $\bCn$ and any eigenpair $(\lambdac_i,\bwc_i)$ of $\bCc$, we have the following relation with the mean vector $\bmu$:
	\begin{equation*}
	 \left(\lambda_i - \lambdac_j \right) \bw_i^\top \bwc_j 
	 = n ~ \bw_i^\top \bmu ~~ \bwc_j^\top \bmu.
	\end{equation*}
\end{lemma}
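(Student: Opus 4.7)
The plan is to exploit the rank-one identity $\bCc = \bCn - \bmu\bmu^\top$ given in \eqref{eq:C_rankone} by evaluating the bilinear form $\bw_i^\top \bCc \bwc_j$ in two different ways and then equating the results. This is the standard "sandwich" trick one uses whenever two symmetric operators differ by a low-rank perturbation, and here it will isolate the mean-related term on the right-hand side.

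First, I would substitute the rank-one relation to write
\begin{equation*}
    \bw_i^\top \bCc \bwc_j \;=\; \bw_i^\top \bCn \bwc_j \;-\; \bw_i^\top \bmu \, \bmu^\top \bwc_j.
\end{equation*}
Next, I would evaluate the left-hand side by acting $\bCc$ on its eigenvector $\bwc_j$, using \eqref{eq:eig_Cc}, to obtain $\bw_i^\top \bCc \bwc_j = \tfrac{1}{n}\lambdac_j \, \bw_i^\top \bwc_j$. Similarly, I would evaluate $\bw_i^\top \bCn \bwc_j$ by acting $\bCn$ (on the left) on its own eigenvector $\bw_i$, which gives $\bw_i^\top \bCn \bwc_j = \tfrac{1}{n}\lambda_i \, \bw_i^\top \bwc_j$ thanks to symmetry of $\bCn$.

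Combining these two evaluations and rearranging yields
\begin{equation*}
    \tfrac{1}{n}\bigl(\lambda_i - \lambdac_j\bigr)\, \bw_i^\top \bwc_j \;=\; \bw_i^\top \bmu \; \bmu^\top \bwc_j,
\end{equation*}
and multiplying both sides by $n$ gives exactly the claimed identity, after noting that $\bmu^\top \bwc_j = \bwc_j^\top \bmu$ by symmetry. There is no serious obstacle here: the entire argument is two applications of the appropriate eigenequation on either side of the rank-one identity. The only point that deserves a line of justification is that one must apply the $\bCn$-eigenrelation on the left (using $\bw_i^\top \bCn = \tfrac{1}{n}\lambda_i \bw_i^\top$) rather than on the right, since $\bwc_j$ is not in general an eigenvector of $\bCn$.
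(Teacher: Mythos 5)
Your proposal is correct and follows essentially the same route as the paper: substitute the rank-one relation $\bCc = \bCn - \bmu\bmu^\top$ into the bilinear form and apply the eigenrelation of $\bCc$ on the right and that of $\bCn$ on the left. The only (cosmetic) difference is that the paper first writes $\bw_i^\top \bwc_j = \tfrac{n}{\lambdac_j}\bw_i^\top \bCc \bwc_j$, implicitly dividing by $\lambdac_j$, whereas your direct evaluation of $\bw_i^\top \bCc \bwc_j$ avoids that division and so holds even when $\lambdac_j = 0$.
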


\begin{proof}
Since $\bwc_j$ is an eigenvector of $\bCc$, we have from expression \eqref{eq:wci_eigen}: 
%\begin{equation*}
	$\bw_i^\top \bwc_j = \tfrac{n}{\lambdac_j} \, \bw_i^\top \bCc \, \bwc_j$. 
%\end{equation*}
By substituting the definition of $\bCc$ from \eqref{eq:C_rankone}, we get:
\begin{align*}
	{\bw_i^\top \bwc_j} 
	&=\tfrac{n}{\lambdac_j} \, \bw_i^\top (\bCn - \bmu \bmu^\top) \, \bwc_j\\
	&=\tfrac{n}{\lambdac_j} \, \bw_i^\top \bCn \, \bwc_j 
	- \tfrac{n}{\lambdac_j} \, \bw_i^\top \bmu \, \bmu^\top \, \bwc_j.
\end{align*}
The first term in the right-hand-side can be simplified, since $\bw_i$ is an eigenvector of $\bCn$, thus $\bw_i^\top \bCn = \tfrac{1}{n} \lambda_i \bw_i^\top$. %This concludes the proof.
\end{proof}

In the above expression, $\bw_i^\top \bmu$ corresponds to the so-called mean score over the $i$-th principal component $\bw_i$ in non-centered PCA, since $\bw_i^\top \bmu = \bw_i^\top \tfrac1n \bX \bone$ corresponds to the mean of the {\em score vector} $\bX\!^\top\! \bw_i$. Moreover, we have 
\begin{equation}\label{eq:wi_mu}
	\bw_i^\top \bmu 
	= \tfrac{1}{\sqrt{\lambda_i}} \balpha_i^\top \bX\!^\top\! \, \tfrac1n \bX \bone
	= \tfrac{1}{n\sqrt{\lambda_i}} \balpha_i^\top \bK \bone
	= \tfrac{\sqrt{\lambda_i}}{n} \, \balpha_i^\top \bone,
\end{equation}
where the first equality follows from \eqref{eq:wj_alphaj} and the last one is due to the fact that $\balpha_i$ is an eigenvector of $\bK$. Once again, we get the main building blocks of the entropy estimate \eqref{eq:keca}. Therefore, it is easy to see that the entropy estimate from the ECA is simply
\begin{equation*}
	\int \hat{p}(\bx)^2 d\bx %= \| \bmu^\phi \|^2 =
	=\tfrac{1}{n^2} \sum_{i=1}^n \lambda_i (\balpha_i\!^\top\! \bone)^2
	= \|\bW\!^\top\! \bmu\|^2.
\end{equation*}

Expression \eqref{eq:wi_mu} illustrates that Theorem~\ref{th:lambdac} can be rewritten in terms of $\bw_i^\top \bmu$. In the following theorem, we derive more effective expressions % than the ones given in Theorem~\ref{th:lambdac}, 
thanks to the simplified definition of $\bCc$.

\begin{theorem}\label{th:lambdac_C}
Theorem~\ref{th:lambdac} can be expressed as follows: 
$$\sum_{i=1}^t d_i' \leq \sum_{i=1}^t \lambdac_i,$$
where $d_i'$ is the $i$-th largest value of $\lambda_i - n (\bw_i^\top \bmu)^2$, for $i=1,2, \ldots,n$. In this expression, $(\tfrac1n \lambda_i,\bw_i)$ is an eigenpair of the matrix $\bCn$. Therefore, we have:
\begin{equation*}
	\max_{i=1,\ldots,n} \lambda_i - n (\bw_i^\top \bmu)^2
	~~  \leq ~~ \lambdac_1.
\end{equation*}
\end{theorem}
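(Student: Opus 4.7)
The plan is to mirror the argument of Theorem~\ref{th:lambdac}, but on the outer-product side using the matrix $\bW\!^\top\! \bCc \bW$ in place of $\bA\!^\top\! \bKc \bA$. Since $\bW$ is orthonormal, $\bW\!^\top\! \bCc \bW$ shares its spectrum with $\bCc$ (the analog of Lemma~\ref{th:same_eigenvalues}), namely the eigenvalues $\tfrac{1}{n}\lambdac_1, \ldots, \tfrac{1}{n}\lambdac_n$ (padded by zeros as needed to match dimension).

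The key step is to compute its diagonal entries. Unlike the Gram-side proof, which had to expand the double-centering expression \eqref{eq:Kc}, the outer-product side benefits from the simple rank-one form $\bCc = \bCn - \bmu\bmu\!^\top$ of \eqref{eq:C_rankone}. Combined with the eigen-equation $\bCn \bw_i = \tfrac{1}{n}\lambda_i \bw_i$, this gives
\begin{equation*}
    \bw_i^\top\! \bCc \bw_i = \bw_i^\top\! \bCn \bw_i - (\bw_i^\top \bmu)^2 = \tfrac{1}{n}\lambda_i - (\bw_i^\top \bmu)^2,
\end{equation*}
which is the structural counterpart of the diagonal computation in the proof of Theorem~\ref{th:lambdac}, but considerably simpler because $\bCc$ is a rank-one perturbation of $\bCn$ rather than the full double-centering of $\bK$.

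Applying the Schur--Horn Theorem~\ref{th:Schur_Horn} to $\bW\!^\top\! \bCc \bW$ yields, for any $t$, that the sum of its $t$ largest diagonal entries is at most $\sum_{i=1}^t \tfrac{1}{n}\lambdac_i$; multiplying through by $n$ delivers exactly the claimed inequality $\sum_{i=1}^t d_i' \leq \sum_{i=1}^t \lambdac_i$, with $d_i'$ the $i$-th largest value of $\lambda_i - n(\bw_i^\top \bmu)^2$. The lower bound on $\lambdac_1$ follows by specializing to $t=1$.

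There is essentially no obstacle: the only bookkeeping to be careful with is the factor $\tfrac{1}{n}$ distinguishing the eigenvalues of $\bCn, \bCc$ from those of $\bK, \bKc$, which is absorbed by the final rescaling. It may also be worth remarking, via \eqref{eq:wi_mu}, that the term $n(\bw_i^\top\bmu)^2$ equals $\tfrac{\lambda_i}{n}(\balpha_i^\top \bone)^2$, making transparent both the link with the entropy building blocks of \eqref{eq:keca} and the parallel with the Gram-side bound of Theorem~\ref{th:lambdac}.
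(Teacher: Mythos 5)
Your proposal is correct and follows essentially the same route as the paper: both compute the diagonal entries of $\bW\!^\top\! \bCc \bW$ via the rank-one form $\bCc = \bCn - \bmu\bmu\!^\top$ to obtain $\tfrac{1}{n}\lambda_i - (\bw_i^\top\bmu)^2$, invoke the analogue of Lemma~\ref{th:same_eigenvalues} to identify the spectrum of $\bW\!^\top\! \bCc \bW$ with that of $\bCc$, and conclude by the Schur--Horn Theorem~\ref{th:Schur_Horn} followed by rescaling by $n$. The closing remark linking $n(\bw_i^\top\bmu)^2$ to $\tfrac{\lambda_i}{n}(\balpha_i^\top\bone)^2$ via \eqref{eq:wi_mu} is a nice addition but not needed for the argument.
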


\begin{proof}%[Sketch of proof]
To prove this result, we use essentially the same steps given in the proof of Theorem~\ref{th:lambdac}, by considering the diagonal entries of the matrix $\bW\!^\top\! \bCc \bW$, with
\begin{align*}
	{\bw_i^\top\! \bCc \bw_i}
	= \bw_i^\top\! (\bCn - \bmu \bmu\!^\top\!) \bw_i 
	%= \bw_i^\top \bCn \bw_i - \bw_i^\top \bmu \bmu^\top \bw_i
	= \tfrac{\lambda_i}{n} \bw_i^\top\! \bw_i - \bw_i^\top\! \bmu \, \bmu\!^\top\! \bw_i,
\end{align*}
and therefore $\bw_i^\top\! \bCc \bw_i = \tfrac{\lambda_i}{n} - (\bw_i^\top \bmu)^2$. Finally, we apply Theorem~\ref{th:Schur_Horn} on the matrix $\bW\!^\top\! \bCc \bW$, and observe by analogy to Lemma~\ref{th:same_eigenvalues} that both matrices $\bCc$ and $\bW\!^\top\! \bCc \bW$ share the same eigenvalues, {\em i.e.,} $\tfrac1n\lambdac_1, \tfrac1n\lambdac_2, \ldots, \tfrac1n\lambdac_n$. %\tfrac1n\lambdac_2, \tfrac1n\lambdac_3 \ldots$.
\end{proof}

%Likewise, by considering the matrix $\bCn$ rather than $\bCc$, we get the following lower bound on %the sum of the largest $t$ (for any $t=1,2,\ldots,n$) eigenvalues is lower bounded as follows: $\sum_{i=1}^t \lambda_i \geq \sum_{i=1}^t d_i$, where $d_i$ is the $i$-th largest value of $\lambdac_i + n (\bwc_i^\top \bmu)^2$, for $i=1,2, \ldots,n$. In this expression, $\lambdac_i$ is the $i$-th eigenvalue of the matrix $\bKc$ associated to the eigenvector $\magenta \balphac_i$. Therefore, we have:
%$\lambda_1/n$, the largest eigenvalue of $\bCn$:
%\begin{equation*}
%	 \max_{i=1,\ldots,n} \lambdac_i + n (\bwc_i^\top \bmu)^2 
%	\leq \lambda_1,
%\end{equation*}
%where $\lambdac_i/n$ denotes the eigenvalue of $\bCc$ associated to the eigenvector $\bwc_i$.

The previous theorem has several important consequences. The following theorem states that the mean vector is close to the eigenvector associated to the largest eigenvalue of $\bCn$.
\begin{theorem}\label{th:w1_m_ineq}
We have the following lower bound on the inner product between the mean vector $\bmu$ and the first eigenvector of $\bCn$:% associated to the largest eigenvalue:
\begin{equation*}
	\lambda_1 - \lambdac_1 \leq n (\bw_1^\top \bmu)^2.
\end{equation*}
\end{theorem}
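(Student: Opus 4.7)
The plan is to derive this bound as an immediate specialization of Theorem~\ref{th:lambdac_C}. That theorem furnishes the lower bound
\[
	\max_{i=1,\ldots,n}\; \lambda_i - n (\bw_i^\top \bmu)^2 \;\leq\; \lambdac_1,
\]
so I would simply observe that the maximum over $i$ is at least as large as the value obtained at $i=1$. In other words, I would first write
\[
	\lambda_1 - n (\bw_1^\top \bmu)^2 \;\leq\; \max_{i=1,\ldots,n}\; \lambda_i - n (\bw_i^\top \bmu)^2,
\]
then chain this with the inequality from Theorem~\ref{th:lambdac_C} to obtain $\lambda_1 - n (\bw_1^\top \bmu)^2 \leq \lambdac_1$, and finally rearrange the two scalar terms to produce the claimed inequality $\lambda_1 - \lambdac_1 \leq n (\bw_1^\top \bmu)^2$.

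There is no real obstacle: the work has already been done in the preceding theorem, where the Schur--Horn theorem was applied to the matrix $\bW\!^\top\! \bCc \bW$ (which shares the eigenvalues of $\bCc$) to bound its largest eigenvalue by the largest of its diagonal entries $\tfrac{\lambda_i}{n} - (\bw_i^\top \bmu)^2$. The present statement is just the $i=1$ specialization after multiplying through by $n$ and reorganizing. If I wanted to make the logic more transparent to the reader, I might note in passing that the bound is tightest precisely when $\bw_1$ is nearly aligned with $\bmu$, in which case centering removes most of the variance captured by the leading non-centered direction; conversely, if $\bw_1 \perp \bmu$, the theorem degenerates to $\lambda_1 \leq \lambdac_1$, which is also consistent with the interlacing property established in Theorem~\ref{th:ineq_lambda}.
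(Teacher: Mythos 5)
Your proof is correct and follows essentially the same route as the paper: both specialize the bound of Theorem~\ref{th:lambdac_C} to the index $i=1$ and rearrange, the paper merely adding the remark (via the interlacing property of Theorem~\ref{th:ineq_lambda}) that $i=1$ is the only index for which the resulting inequality is non-vacuous. No gap.
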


\begin{proof}
From Theorem~\ref{th:lambdac_C}, $\lambda_i - \lambdac_1 \leq n (\bw_i^\top \bmu)^2$ for any $i$. This inequality can be investigated only when the left-hand-side is non-negative. As shown from the interlacing property in Theorem~\ref{th:ineq_lambda}, $\lambdac_1 \leq \lambda_i$ if and only if $i=1$. By considering this case, we conclude the proof.
\end{proof}

The following theorem shows that the eigenvectors associated to the largest eigenvalue of each matrix $\bCn$ and $\bCc$, cannot be arbitrary different.
\begin{theorem}\label{th:innerproducts_ineq}
We have the following lower bound on the inner product between the first eigenvector of each of $\bCn$ and $\bCc$:
\begin{equation*}
	  \frac{(\bwc_1^\top \bmu)^2}{\|\bmu\|^2}
	\leq (\bw_1^\top \bwc_1)^2.
\end{equation*}
\end{theorem}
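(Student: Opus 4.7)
The plan is to reduce the claim to a clean spectral identity derived from Lemma~\ref{th:delta_lambda}, and then exploit the interlacing Theorem~\ref{th:ineq_lambda} in two different places. First I would dispatch the trivial case: if $\bwc_1^\top\bmu=0$ the left-hand side vanishes and there is nothing to prove, so assume $\bwc_1^\top\bmu\neq 0$. Specializing Lemma~\ref{th:delta_lambda} to $j=1$ and solving for each coordinate of $\bmu$ in the eigenbasis of $\bCn$ gives, for every $i$,
$$\bw_i^\top\bmu \;=\; \frac{(\lambda_i-\lambdac_1)\,(\bw_i^\top\bwc_1)}{n\,\bwc_1^\top\bmu}.$$

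Next, since $\{\bw_i\}_{i=1}^d$ is an orthonormal basis of $\R^d$, I would expand $\bwc_1^\top\bmu = \sum_i(\bw_i^\top\bwc_1)(\bw_i^\top\bmu)$, substitute the identity above, and clear the denominator to obtain the key identity
$$n(\bwc_1^\top\bmu)^2 \;=\; \sum_{i=1}^n(\lambda_i-\lambdac_1)\,(\bw_i^\top\bwc_1)^2.$$
This is the point where interlacing enters for the first time: Theorem~\ref{th:ineq_lambda} yields $\lambda_i\le\lambda_2\le\lambdac_1$ for every $i\ge 2$, so all terms with $i\ge 2$ are non-positive and can be discarded, leaving
$$n(\bwc_1^\top\bmu)^2 \;\le\; (\lambda_1-\lambdac_1)\,(\bw_1^\top\bwc_1)^2.$$

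To finish, I would pass from the single spectral gap $\lambda_1-\lambdac_1$ to the full quantity $n\|\bmu\|^2$ by invoking the covariance-matrix analogue of Lemma~\ref{th:traceK} stated in Section~\ref{sec:C}, namely $n\|\bmu\|^2 = \sum_i(\lambda_i-\lambdac_i)$. Interlacing is used a second time here, in a different direction, to assert that every summand $\lambda_i-\lambdac_i$ is non-negative, so in particular $\lambda_1-\lambdac_1 \le n\|\bmu\|^2$. Substituting this into the previous display and dividing through by $n\|\bmu\|^2$ delivers the stated inequality.

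The main obstacle is recognizing that interlacing must be invoked twice and in asymmetric fashion --- once as an upper bound ($\lambda_i\le\lambdac_1$ for $i\ge 2$, to kill tail terms) and once as a per-index lower bound ($\lambda_i\ge\lambdac_i$, to control the head gap by $n\|\bmu\|^2$). A naive Cauchy--Schwarz applied directly to the expansion $\bwc_1^\top\bmu=\sum_i(\bw_i^\top\bwc_1)(\bw_i^\top\bmu)$ only recovers the much weaker bound $(\bwc_1^\top\bmu)^2\le\|\bmu\|^2$; the strength of the claimed inequality comes precisely from plugging Lemma~\ref{th:delta_lambda} in first, so that the factor $(\bw_i^\top\bwc_1)^2$ appears under a sum weighted by signed gaps whose signs the interlacing theorem pins down.
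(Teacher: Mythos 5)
Your argument is correct, and it is a genuinely different route from the paper's. The paper proves the statement in two short moves: it takes the already-established bound $\lambda_1 - \lambdac_1 \leq n(\bw_1^\top\bmu)^2$ of Theorem~\ref{th:w1_m_ineq} (itself a consequence of the Schur--Horn machinery in Theorem~\ref{th:lambdac_C}), substitutes Lemma~\ref{th:delta_lambda} only at $i=j=1$ to eliminate $\lambda_1-\lambdac_1$, squares, and finishes with Cauchy--Schwarz in the form $(\bw_1^\top\bmu)^2\leq\|\bmu\|^2$. You instead use Lemma~\ref{th:delta_lambda} for \emph{all} $i$ at $j=1$, combine it with Parseval to obtain the exact spectral identity $n(\bwc_1^\top\bmu)^2=\sum_i(\lambda_i-\lambdac_1)(\bw_i^\top\bwc_1)^2$, and then control the signs of the gaps with two distinct uses of the interlacing Theorem~\ref{th:ineq_lambda} plus the trace identity of Lemma~\ref{th:traceK}; all the steps check out ($\lambda_i\leq\lambda_2\leq\lambdac_1$ for $i\geq2$ kills the tail, and $0\leq\lambda_i-\lambdac_i$ with $\sum_i(\lambda_i-\lambdac_i)=n\|\bmu\|^2$ controls the head). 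What your route buys: it bypasses Theorem~\ref{th:w1_m_ineq} and the Schur--Horn apparatus entirely, it handles the degenerate case $\bwc_1^\top\bmu=0$ explicitly, and it sidesteps the paper's squaring step, which as written silently assumes the quantity $\bw_1^\top\bmu\,\bwc_1^\top\bmu/\bw_1^\top\bwc_1$ is nonnegative before squaring preserves the inequality. What it gives up: the paper's chain actually establishes the sharper intermediate bound $(\bwc_1^\top\bmu)^2/(\bw_1^\top\bmu)^2\leq(\bw_1^\top\bwc_1)^2$, of which the stated theorem is a weakening via Cauchy--Schwarz; your final step $\lambda_1-\lambdac_1\leq n\|\bmu\|^2$ lands directly on the weaker form and does not recover that refinement. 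A minor cosmetic point: your sums should run over the $d$ eigenvectors forming the orthonormal basis of $\R^d$ (indices beyond the data rank carry $\lambda_i=0$ and are harmlessly discarded with the other tail terms), and the division by $\|\bmu\|^2$ presupposes $\bmu\neq\bzero$, as the theorem statement itself implicitly does.
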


\begin{proof}
From Theorem~\ref{th:w1_m_ineq}, and by replacing the left-hand-side by the expression given in Lemma~\ref{th:delta_lambda}, we get
\begin{equation*}
	\frac{\bw_1^\top \bmu ~~ \bwc_1^\top \bmu}{\bw_1^\top \bwc_1} \leq  (\bw_1^\top \bmu)^2.
\end{equation*}
By squaring and simplifying by $(\bw_1^\top \bmu)^2$, we obtain:
\begin{equation*}
	\frac{(\bwc_1^\top \bmu)^2}{(\bw_1^\top \bmu)^2} 
		\leq  (\bw_1^\top \bwc_1)^2.
\end{equation*}
The above denominator can be upper bounded thanks to the Cauchy-Schwarz inequality, with 
%\begin{equation*}
	$(\bw_1^\top \bmu)^2 \leq \|\bw_1\|^2 \|\bmu\|^2 = \|\bmu\|^2$, 
%\end{equation*}
due to the normalization of the eigenvectors. By combining these results, this concludes the proof.%we obtain the final result.
\end{proof}

This theorem has an immediate result which shows that $\bwc_1$ is {\em closer} to $\bw_1$ than to $\bmu$. This property is illustrated in the following corollary.
\begin{corollary}\label{th:cosine_ineq}
The cosine of the angle between the first eigenvectors of $\bCn$ and $\bCc$ %, associated to their largest eigenvalues, 
is lower bounded as follows:
\begin{equation*}
	\cos(\bwc_1, \bmu)^2 \leq \cos(\bwc_1,\bw_1)^2.
\end{equation*}
\end{corollary}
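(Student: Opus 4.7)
The plan is to derive this corollary directly from Theorem~\ref{th:innerproducts_ineq} by interpreting both sides of that inequality as squared cosines, using only the fact that the eigenvectors $\bw_1$ and $\bwc_1$ are unit-norm.

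First I would recall that for any nonzero vectors $\bu$ and $\bv$, $\cos(\bu,\bv)^2 = (\bu^\top\bv)^2/(\|\bu\|^2\|\bv\|^2)$. Since $\bwc_1$ and $\bw_1$ are eigenvectors of the symmetric matrices $\bCc$ and $\bCn$ respectively, they can (and here will) be taken to have unit norm, i.e.\ $\|\bwc_1\|=\|\bw_1\|=1$. Consequently, the right-hand side of Theorem~\ref{th:innerproducts_ineq} is exactly $\cos(\bw_1,\bwc_1)^2$, namely
\begin{equation*}
  (\bw_1^\top \bwc_1)^2 = \frac{(\bw_1^\top \bwc_1)^2}{\|\bw_1\|^2\,\|\bwc_1\|^2} = \cos(\bw_1,\bwc_1)^2.
\end{equation*}

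Next I would rewrite the left-hand side of Theorem~\ref{th:innerproducts_ineq} in the same way, using $\|\bwc_1\|^2 = 1$ to insert the missing factor in the denominator:
\begin{equation*}
  \frac{(\bwc_1^\top \bmu)^2}{\|\bmu\|^2} = \frac{(\bwc_1^\top \bmu)^2}{\|\bwc_1\|^2\,\|\bmu\|^2} = \cos(\bwc_1,\bmu)^2.
\end{equation*}
Substituting these two identifications into Theorem~\ref{th:innerproducts_ineq} yields $\cos(\bwc_1,\bmu)^2 \leq \cos(\bwc_1,\bw_1)^2$, which is the claim.

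There is essentially no obstacle here: the proof reduces to a bookkeeping observation once one notices that Theorem~\ref{th:innerproducts_ineq} is already phrased, up to unit-norm normalizations, in terms of cosines. The only thing worth highlighting explicitly is the unit-norm convention for the eigenvectors, which is used implicitly throughout Section~\ref{sec:C} (for instance in the Cauchy--Schwarz step at the end of the proof of Theorem~\ref{th:innerproducts_ineq}).
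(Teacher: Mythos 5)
Your proposal is correct and follows exactly the paper's own route: both simply reinterpret the two sides of Theorem~\ref{th:innerproducts_ineq} as squared cosines using the unit-norm convention $\|\bw_1\|=\|\bwc_1\|=1$. The only difference is that you spell out the bookkeeping a bit more explicitly than the paper does.
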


\begin{proof}
The proof is straightforward from Theorem~\ref{th:innerproducts_ineq} and the definition of the inner product with $\|\bw_1\| = \|\bwc_1\|=1$, namely
$\bw_1^\top \bwc_1 = \cos(\bw_1, \bwc_1)$ and $\bw_1^\top \bmu = \|\bmu\| \cos(\bw_1, \bmu)$.
\end{proof}

We conclude this section by giving a summary of the relations obtained between the eigenvectors associated to the largest eigenvalues of the covariance matrix and its non-centered counterpart. In the non-centered case, we see from Theorem~\ref{th:w1_m_ineq} that the eigenvector $\bw_1$ of $\bCn$ tends to be collinear with the the mean vector $\bmu$. Now, consider the case when data are centered, which leads to the first eigenvector $\bwc_1$ of $\bCc$. Theorem~\ref{th:innerproducts_ineq} and Corollary~\ref{th:cosine_ineq} provide inequalities that measure the fact that $\bwc_1$ is ``closer'' to $\bw_1$ than to $\bmu$.

\subsubsection{Connections to the work of Cadima and Jolliffe}\label{sec:connections}

The above results, obtained by confronting the covariance matrix and its non-centered counterpart, corroborate the work of Cadima and Jolliffe in \cite{PCAuncenter}. In the latter, the eighth property in Proposition 3.1 gives a result equivalent to the above Lemma~\ref{th:delta_lambda}, however our proof is much shorter and significantly simpler than in \cite[proof that spans nearly all the page 499]{PCAuncenter}. Likewise, the ninth property in Proposition 3.1 is equivalent to Theorem~\ref{th:lambdac_C}, while our proof is slightly simpler.

Finally, Theorem~\ref{th:innerproducts_ineq} and Corollary~\ref{th:cosine_ineq} provide bounds that do not depend on the relation between $\bw_1$ and $\bmu$, as opposed to the fourteenth property in Proposition 3.1 in \cite{PCAuncenter}. In our case, we have more comprehensive expressions with simpler bounds, thus offering a straightforward interpretation.

\section{Beyond conventional centering}\label{sec:extensions}

In this section, we show that several research activities can take advantage of our study, apart from the conventional centering issue and beyond the scope of bridging the gap between PCA and ECA.

%\subsection{Beyond conventional centering}

\subsection{Weighted mean shift}

The issue of centering the data has also been investigated with the use of a vector other than the mean of the data. Weighted means provide a generalization of the conventional mean. They have been commonly studied in the literature, such as in statistics with population studies \cite{Meier1953}. More recently, a weighted mean is considered in \cite{Higuchi:2004} to derive a robust PCA algorithm. In \cite{SuzukiHSSF13}, the authors study the use of a weighted mean in a k-nearest neighbor algorithm, in order to reduce hubs\footnote{A hub is a sample that is very similar to many other samples of the dataset. Hubs emerge from the curse of dimensionality, and tend to be close to the data mean, {\em i.e.,} centroid. See \cite{Radovanovic2010} for more details on the concept of hubs in machine learning.}.

We propose to extend the our study to the issue of a weighted mean. Let $\bomega$ be a weight vector such as $\bomega\!^\top\! \bone =1$, and let $\bmu_\bomega=\bX \bomega$ be the corresponding weighted mean. The matrix
\begin{equation}\label{eq:Pw}
	\bP{\bomega} =\bomega \bone\!^\top
\end{equation}
maps the data such that their weighted mean becomes zero, with ${\bX \bP{\bomega}} = \bmu_\bomega \bone\!^\top$. This matrix defines a projection map, since it is idempotent ({\em i.e.}, ${\bP{\bomega}}\!\!^2 = \bP{\bomega}$), but it is not necessary orthogonal. To define an orthogonal projection, the matrix needs to be symmetric, which means that $\bomega = \frac1n \bone$ and therefore we get the particular case of the conventional mean studied so far.%given in expression \eqref{eq:P1}. 

As shown in the following, it turns out that this generalization of the projection can be easily studied with the analysis of the inner product matrices, as given in Section~\ref{sec:K}. Unfortunately, the analysis of the covariance matrix is no longer as easy as in Section~\ref{sec:C}. The main difficulty raises from the non-symmetric property of the matrix $\bP{\bomega}$ in the general case, due to the relaxation of the orthogonality in the projection.

Before proceeding, we revisit relations \eqref{eq:I_P_1}--\eqref{eq:trIP_M_IP} in the light of this general definition, as follows:% for any matrix $\bM$:
\begin{eqnarray*}
	&(\bI - \bP{\bomega})^\top \bomega = \bomega\!^\top (\bI - \bP{\bomega}) = \bzero;&
\\	&{\bP{\bomega}}\!\!^\top \! \bM \bP{\bomega} = \bone \bomega\!^\top \! \bM \bomega \bone\!^\top = n \, (\bomega\!^\top \!\bM \bomega) \, \bP{\bone};&
%\\	 &\tr({\bP{\bomega}}\!\!^\top \! \bM \bP{\bomega}) %= \|\bomega\|^2 \bone\!^\top \! \bM \bone 
%	=  n \, \bomega\!^\top \! \bM \bomega;&
\\	&\tr({\bP{\bomega}}\!\!^\top \! \bM) = \tr(\bM\!^\top \! \bP{\bomega}) = \tfrac1n\tr({\bP{\bomega}}\!\!^\top \! \bM \bP{\bomega}) = \bomega\!^\top \!\bM \bone
%\text{~and~} \tr(\bM \bP{\bomega}) = \bone\!^\top \!\bM \bomega
.&
\end{eqnarray*}

By substituting $\bM$ with $\bK$ in the above expressions, we get $\bomega\!^\top \! \bK \bomega = \bomega\!^\top\! \bX\!^\top\! \bX \bomega = \|\bmu_\bomega\|^2$. Therefore, the definition of $\bKc$ becomes
\begin{align}\label{eq:Kcw}
	{\bKc} %&= (\bI - \bP{\bomega})^\top \bK(\bI - \bP{\bomega}) \\
		%&= \bK - \bP{\bomega}\bK - \bK\bP{\bomega} + n \, \bomega^\top \bK \bomega \, \bP{\bone} \\
		&= \bK - {\bP{\bomega}}\!\!^\top \bK - \bK\bP{\bomega} + n \|\bmu_\bomega\|^2 \bP{\bone}.
\end{align}
Lemma~\ref{th:traceK} becomes $\tr(\bKc) = \tr(\bK) - 2n \bmu_\bomega^\top \bmu + n \|\bmu_\omega\|^2$, where we have used $\bomega\!^\top \!\bK \bone = \bomega\!^\top\! \bX\!^\top\! \bX \bone = n \, \bmu_\bomega^\top \bmu$.

The analysis of the eigenvalues remains unchanged in the weighted mean case, including the interlacing property as given in Theorem~\ref{th:ineq_lambda}. The only difference lies in the bounds proposed in Theorem~\ref{th:lambdac}. The general results are derived from expression \eqref{eq:Kcw} as follows:
%\begin{align*}
%	&{\balpha_i^\top\!} {\bKc \balpha_i} \\
%%	& = \balpha_i^\top \big( \bK - {\bP{\bomega}}\!\!^\top \bK - \bK\bP{\bomega} + n \|\bmu_\bomega\|^2 \bP{\bone} \big) \balpha_i \nonumber \\
%	& = \balpha_i^\top\! \bK \balpha_i
%	- \balpha_i^\top\! {\bP{\bomega}}\!\!^\top \! \bK \balpha_i 
%	- \balpha_i^\top\! \bK\bP{\bomega} \balpha_i 
%	+ n \|\bmu\|^2 \balpha_i^\top\! \bP{\bone} \balpha_i \nonumber\\
%	& = \lambda_i\balpha_i^\top\! \balpha_i
%	- \lambda_i \balpha_i^\top\! {\bP{\bomega}}\!\!^\top \balpha_i 
%	- \lambda_i \balpha_i^\top\! \bP{\bomega} \balpha_i 
%	+ n \|\bmu\|^2 \balpha_i^\top\! \bP{\bone} \balpha_i \nonumber\\
%%	& = \lambda_i\balpha_i^\top \balpha_i
%%	+ (n \|\bmu\|^2 - 2\lambda_i) \, \balpha_i^\top \bP{\bone} \balpha_i \nonumber\\
%	& = \lambda_i
%	+ \big(\|\bmu\|^2 \, \balpha_i^\top\! \bone - 2 \lambda_i \, \balpha_i^\top\! \bomega \big) \, \balpha_i^\top\! \bone.
%\end{align*}
%Thus, we have
\begin{equation*}
	\max_{i=1,\ldots,n} \lambda_i
	+ \big(\|\bmu\|^2 \, \balpha_i^\top\! \bone - 2 \lambda_i \, \balpha_i^\top\! \bomega \big) \, \balpha_i^\top\! \bone
	~~~ \leq ~~~ \lambdac_1.
\end{equation*}
It is also easy to verify that the eigenvectors satisfy $\balphac_j^\top\! \bomega = 0$ for any non-zero eigenvalue.

The analysis of the covariance matrix is more complicated than the study derived in Section~\ref{sec:C}. This is due to the resulting expression of the covariance matrix in the general case of a weighted mean, with\begin{align*}
\bCc 
 & = \tfrac1n \bX (\bI - \bP{\bone})(\bI - \bP{\bone}\!^\top)\bX\!^\top
\\ & = \bCn - \bmu_\bomega \bmu\!^\top - \bmu \bmu_\bomega^\top
		+ \bmu_\bomega \bmu_\bomega^\top.
\end{align*}
Still, one can derive several results. For instance, the lower bound in Theorem~\ref{th:lambdac_C} becomes:
\begin{equation*} 
	\max_{i=1,\ldots,n} \lambda_i 
	- 2n \, \bw_i^\top \bmu_\bomega \; \bw_i^\top \bmu 
	+ n (\bw_i^\top \bmu_\bomega)^2
\leq \lambdac_1.
\end{equation*}

\subsection{Rank-one update of the covariance matrix}

As given in expression \eqref{eq:C_rankone}, the matrix $\bCc$ is a special case of the rank-one update of the matrix $\bC$. It turns out that the study given in Sections \ref{sec:K} and \ref{sec:C} can be extended to any rank-one update of the covariance matrix. Such update is of great interest in covariance matrix adaptation within machines that rely on Gaussian random variations, such as evolutionary strategies \cite{suttorp2009ecm} and ensemble optimization \cite{Fonseca2013}. In \cite{Djuric2010}, the author provides connections of these genetic machines to Monte Carlo-based methods, including particle filtering and population Monte Carlo. Without lost of generality\footnote{For the sake of clarity, we examine the rank-one update of the covariance matrix. One could also study the rank-one update of the Gram matrix. In this case, simply replace $\bC_t$, $\bw_{i,t}$ and $\tfrac{1}{n}\lambda_{i,t}$, with $\bK_t$, $\balpha_{i,t}$ and $\lambda_{i,t}$.}, this section presents the issue of covariance matrix adaptation in evolutionary strategies \cite{Muller2010}. See also \cite{Meyer-Nieberg2006,Kramer2010} for a survey.

The re-sampling techniques solve hard optimization problems by generating a set of candidate solutions. The performance highly depends on the population's distribution under investigation. Evolutionary strategies provide an elegant approach to derive (quasi) parameter-free techniques for the user. This principle of self-adaptation allows to adjust the distribution in the direction of more relevant regions in the search space.

Let $\bv_t$ be a candidate solution generated from a zero-mean Gaussian distribution with covariance matrix $\bC_t$. The latter is adapted according to the relevance of $\bv_t$ in the optimization problem. Let $\nu_t \in \; ] 0 , 1 [$ be a parameter that measures this relevance, where high values correspond to promising pertinent fitness progress. The rank-one update rule of the covariance matrix at iteration $t$ is given by
\begin{equation}\label{eq:C_t}
	\bC_{t+1} = (1-\nu_t) \,\bC_{t} + \nu_t \, \bv_t \bv_t^\top.
\end{equation}
This rule allows to adjust the distribution towards the zero-mean Gaussian distribution with covariance matrix $\bv_t \bv_t^\top$, namely the distribution with the highest probability to generate $\bv_t$
among all zero-mean Gaussian distributions.

We show next that one can take advantage of the mathematical statements presented in Section~\ref{sec:C} in order to provide new insights to the update rule \eqref{eq:C_t}. Let $(\frac1n \lambda_{i,t},\bw_{i,t})$ be the $i$-th eigenpair of $\bC_{t}$, namely
\begin{equation}\label{eq:eigenCt}
	\bC_{t} \, \bw_{i,t} = \tfrac{1}{n} \lambda_{i,t} \, \bw_{i,t}.
\end{equation}

Firstly, the variance of the data can be measured with the sum of eigenvalues of the covariance matrix, which is given by its trace thanks to the relation~\eqref{eq:trace}. By following the same derivations as in Lemma~\ref{th:traceK}, we get the following relation $\sum_{i=1}^n \lambda_{i,t+1} = (1-\nu_t) \sum_{i=1}^n \lambda_{i,t} +  n \, \nu_t \, \|\bv_t\|^2$. From expression~\eqref{eq:C_t}, Lemma~\ref{th:delta_lambda} becomes:
\begin{equation}\label{eq:delta_lambda_t}
	 \left(\lambda_{j,t+1} - (1-\nu_t)\lambda_{i,t} \right) \bw_{i,t}^\top \bw_{j,t+1}  = n \, \nu_t \, \bw_{i,t}^\top \bv_t ~ \bw_{j,t+1}^\top \bv_t.
\end{equation}
%since we have from \eqref{eq:eigenCt}:
%\begin{align*}
%	{\bw_{i,t}^\top \bw_{j,t+1}} 
%	&=\tfrac{n}{\lambda_{j,t+1}} \, \bw_{i,t}^\top \bC_{t+1} \, \bw_{j,t+1} \\
%	&=\tfrac{n}{\lambda_{j,t+1}} \, \bw_{i,t}^\top ((1-\nu_t)\bC_{t} + \nu_t\bv_t \bv_t^\top) \, \bw_{j,t+1}\\
%	&=\tfrac{n(1-\nu_t)}{\lambda_{j,t+1}} \bw_{i,t}^\top \bC_{t} \, \bw_{j,t+1} 
%	+ \tfrac{n\nu_t}{\lambda_{j,,t+1}} \bw_{i,t}^\top \bv_t \, \bv_t^\top \!\bw_{j,t+1}\\
%	&=\tfrac{(1-\nu_t)\lambda_{i,t}}{\lambda_{j,t+1}} \bw_{i,t}^\top \bw_{j,t+1} 
%	+ \tfrac{n\nu_t}{\lambda_{j,t+1}} \bw_{i,t}^\top \bv_t \, \bv_t^\top \!\bw_{j,t+1}.
%\end{align*}

Bounds on the eigenvalues of the covariance matrix can be easily derived,  by following the same steps given in the proof of Theorem~\ref{th:lambdac_C}. %To this end, we write
%\begin{align*}
%	\bw_{i,t}^\top \bC_{t+1} \, \bw_{i,t}
%	& = \bw_{i,t}^\top ((1-\nu_t)\bC_{t} + \nu_t\bv_t \bv_t^\top) \, \bw_{i,t}
%\\	& = (1-\nu_t)\tfrac{\lambda_{i,t}}{n} + \nu_t (\bw_{i,t}^\top \bv_t)^2.
%\end{align*}
Therefore, we have
\begin{equation*}
	\max_{i=1,\ldots,n} (1-\nu_t)\lambda_{i,t} + n\nu_t (\bw_{i,t}^\top \bv_t)^2
	~~ \leq ~~ \lambda_{1,t+1}.
\end{equation*}
%Likewise, by considering $\bw_{i,t+1}^\top \bC_{t} \, \bw_{i,t+1}$ and the same steps as in the proof of Theorem~\ref{th:lambdac_C}, we have $\lambda_{1,t} \geq \max_{i=1,\ldots,n} (1-\nu_t)\lambda_{i,t+1} - n\nu_t (\bw_{i,t+1}^\top \bv_t)^2$.

In order to study the impact of the update rule on the eigenvectors of the covariance matrice, we revisit Theorems~\ref{th:w1_m_ineq}-\ref{th:innerproducts_ineq} and Corollary \ref{th:cosine_ineq}. From the above expression, we have for any $i=1,2, \ldots, n$:
\begin{equation*}
	\lambda_{1,t+1} - (1-\nu_t)\lambda_{i,t}  \geq n\nu_t (\bw_{i,t}^\top \bv_t)^2.
\end{equation*}
By injecting expression \eqref{eq:delta_lambda_t} for $j=1$, we get
\begin{equation*}
	\frac{n \, \nu_t \, \bw_{i,t}^\top \bv_t ~ \bw_{1,t+1}^\top \bv_t}{\bw_{i,t}^\top \bw_{1,t+1}}  \geq n\nu_t (\bw_{i,t}^\top \bv_t)^2.
\end{equation*}
By squaring and simplifying by $(\bw_{i,t}^\top \bv_t)^2$, we obtain
\begin{equation*}
	(\bw_{i,t}^\top \bw_{1,t+1})^2 \leq \frac{(\bw_{1,t+1}^\top \bv_t)^2}{(\bw_{i,t}^\top \bv_t)^2},
\end{equation*}
and equivalently
\begin{equation*}
	\cos(\bw_{i,t}, \bw_{1,t+1})^2 \leq \frac{\cos(\bw_{1,t+1}, \bv_t)^2}{\cos(\bw_{i,t}, \bv_t)^2}.
\end{equation*}
This bound shows that the first eigenvector of $\bC_{t+1}$ forms a greater angle with all the eigenvectors of $\bC_{t}$ than with the vector $\bv_t$. This result, independent of the value of the parameter $\nu_t$, illustrates the diversity introduced by applying the update rule \eqref{eq:C_t}.

\subsection{Multidimensional scaling}

Multidimensional scaling (MDS) is a well-known dimensionality reduction technique that seeks to preserve pairwise distances or dissimilarity measures \cite{MDS}. The problem is to estimate all $\bx_1, \bx_2, \ldots, \bx_n$ from their available distances, denoted $\| \bx_i - \bx_j \|$ between $\bx_i$ and $\bx_j$. By expanding this expression, we get $\| \bx_i - \bx_j \|^2 %= (\bx_i - \bx_j)^\top (\bx_i - \bx_j) 
= \bx_i^\top\! \bx_i + \bx_j^\top\! \bx_j - 2 \bx_i^\top\! \bx_j$. Therefore, one can define an inner product from distances, with $\bx_i^\top\! \bx_j = -\tfrac12 ( \|\bx_i - \bx_j\|^2 -  \|\bx_i\|^2 - \|\bx_j\|^2)$, or equivalently in matrix form%Next, we write this expression in matrix form. Let $\bdelta$ be the column vector whose $i$-th entry is $\|\bx_i\|^2$. Then, we have
\begin{equation*}
	\bX\!^\top\! \bX = \bDelta +\tfrac12 \bdelta \bone\!^\top + \tfrac12 \bone\bdelta\!^\top,
\end{equation*}
where $\bDelta$ is the matrix of entries $-\tfrac12 \| \bx_i - \bx_j \|^2$ and $\bdelta$ is the column vector whose $i$-th entry is $\|\bx_i\|^2$. In order to remove the indeterminacy with respect to translation, the inner product of the centered data is considered. Thus, the double centering from \eqref{eq:Kc_Kn} gives us:
\begin{align}\label{eq:MDS_Kc_Delta}
	{\bKc} %&= \bXc^\top \bXc \nonumber \\
		&= (\bI - \bP{\bone}) \bX\!^\top\! \bX (\bI - \bP{\bone}) \nonumber \\
		&= (\bI - \bP{\bone})( \bDelta + \tfrac12 \bdelta \bone\!^\top + \tfrac12 \bone\bdelta\!^\top) (\bI - \bP{\bone})\nonumber\\
		&=  (\bI - \bP{\bone}) \bDelta (\bI - \bP{\bone}),
\end{align}
where the last equality follows from \eqref{eq:I_P_1}. An eigendecomposition of this matrix provides the relevant axes to describe the samples. This is the classical MDS. Next, we study the MDS in the light of our work.

Expression \eqref{eq:MDS_Kc_Delta} is similar to the double centering of the Gram matrix in \eqref{eq:Kc_Kn}. While, by construction, $\bKc$ in both expressions is a Gram matrix, there is however one major difference: $\bDelta$ is not a positive definite matrix. It turns out that the corresponding function $\kappa(\bx_i,\bx_j) = -\frac12 \| \bx_i - \bx_j \|^2$ is a conditionally positive definite kernel \cite{distancetrick}. Next, we define this principle and give some properties, before studying its impact on the mathematical statements in this paper.

A conditionally positive definite kernel is a symmetric function that satisfies the inequality \eqref{eq:pd_kernel} for any $\bbeta$ such that $\bbeta^\top\! \bone=0$. This is the case of $\kappa(\bx_i,\bx_j) = -\frac12 \| \bx_i - \bx_j \|^2$, since we have for any $\bbeta^\top\! \bone = 0$:
\begin{equation*}
	\bbeta\!^\top\!\! \bDelta \bbeta  = \bbeta\!^\top\!( 2\bX\!^\top\!\! \bX - \bdelta \bone\!^\top \!- \bone\bdelta\!^\top \! ) \bbeta = 2 \bbeta\!^\top\!\! \bX\!^\top\!\! \bX \bbeta = 2 \| \bX \bbeta \|^2 \! \geq \! 0.
\end{equation*}
In \cite{distancetrick}, the author provides a thorough description of conditionally positive definite kernels, and argues that they are ``as good as'' positive definite kernels whenever a translation invariant problem is investigated, such as in PCA and SVM. It is worth noting that one can include a positive bias $b$ large enough such that $\kappa(\bx_i,\bx_j) + b$ is positive definite, thus eliminating the term associated to the negative eigenvalue.

We return now to the main issue of this section, which is the analysis of the relations between the matrices $\bKc$ and $\bDelta$. It turns out that one can take advantage of most of the mathematical statements derived in Section~\ref{sec:K} for this purpose, as illustrated next by substituting $\bK$ with $\bDelta$. To this end, we write expression \eqref{eq:MDS_Kc_Delta} as follows:
\begin{align*}%\label{eq:MDS_Kc_Delta}
	{\bKc} 
%	&= (\bI - \bP{\bone}) \bDelta (\bI - \bP{\bone})\\
%	&=  \bDelta - \bP{\bone} \bDelta - \bDelta\bP{\bone} + \bP{\bone}\bDelta\bP{\bone}\\
	&=  \bDelta - \bP{\bone} \bDelta - \bDelta\bP{\bone} + \tfrac{\bone\!\!^\top\!\! \bDelta \bone}{n} \, \bP{\bone}.
\end{align*}
This illustrates the analogy with \eqref{eq:Kc_Kn2}, where
\begin{equation*}
	\bone\!^\top\!\! \bDelta \bone = - \frac12 \sum_{i,j=1}^{n} \|\bx_i - \bx_j\|^2.
\end{equation*}

Moreover, since all diagonal entries of $\bDelta$ are null, then its trace is null, as well as the sum of its eigenvalues. Lemma~\ref{th:Schur_Horn_K}
and Lemma~\ref{th:traceK} provide new insights. The former, namely the Schur-Horn Theorem given in Theorem~\ref{th:Schur_Horn} applied for the matrix $\bDelta$, shows that $0 \leq \sum_{i=1}^t \lambda_i$ for all $t=1,2, \ldots, n$, with equality for $t=n$, that is $\lambda_n = - \sum_{i=1}^{n-1} \lambda_i$. Lemma~\ref{th:traceK} shows the variability of the data with $\sum_{i=1}^n \lambdac_i = %\tr(\bKc) = 0 - \tr(\bP{\bone} \bDelta) =
 -\frac1n \bone\!\!^\top\!\! \bDelta \bone = \frac{1}{2n} \sum_{i,j=1}^{n} \|\bx_i - \bx_j\|^2$.

An analysis of the distribution of the eigenvalues of each of $\bKc$ and $\bDelta$ is given by considering once again the Separation Theorem given in Theorem~\ref{th:separation}. To this end, let $\bM = \bDelta$, $\bP{\mathrm{left}} = \bP{\mathrm{right}} = (\bI - \bP{\bone})$, where $r(\bP{\mathrm{left}})=r(\bP{\mathrm{right}})=n-1$, and thus $t=2$. This leads to the following pair of inequalities:
\begin{equation*}
	\sigma_{j+2}(\bDelta) \leq \sigma_{j}(\bKc) \leq \sigma_{j}(\bDelta).
\end{equation*}
The resulting inequalities are not as tight as the ones given in Theorem~\ref{th:ineq_lambda}, due to the use of the decomposition $\bKc = \bXc^\top\! \bXc$ in the latter case. Similar tight inequalities may be derived when one assumes that such decomposition is valid for $\bKc$ defined in \eqref{eq:MDS_Kc_Delta}.

Bounds on the eigenvalues of $\bDelta$ and $\bKc$ can be derived with the help of the Schur--Horn Theorem given in Theorem~\ref{th:Schur_Horn}. By virtue of Lemma~\ref{th:same_eigenvalues}, one can describe results by following the same steps in the proof of Theorem~\ref{th:lambdac}. Thus, a lower bound on the largest eigenvalue of $\bKc$ is given by 
\begin{equation*} 
	 \max_{i=1,\ldots,n} \lambda_i
	+ (\tfrac{\bone\!\!^\top\!\! \bDelta \bone}{n^2} - \tfrac{2}{n} \lambda_i ) ~ (\balpha_i^\top \bone)^2
	~~ \leq ~~ \lambdac_1,
\end{equation*}
where $(\lambda_i,\balpha_i)$ is an eigenpair of $\bDelta$.

%Finally, it is easy to verify that the analysis on the eigenvectors of $\bKc$ in Section~\ref{sec:eigenvectors} can still be verified in MDS. Therefore, we still have $\balphac_i^\top \bone=\bzero$ and $-1 \leq \balphac_j \leq 1$.

When applied on noisy data in practice, the MDS technique considers the factorization of the resulting matrix $\bKc$, such that $\bKc = \bAc \bLambdac \bAc\!\!\!^\top$ where only the largest non-negative eigenvalues are retained. From this expression, one defines a Gram matrix by setting $\bXc = (\bLambdac)\!^{\frac12} \!\bAc\!\!\!^\top$. This construction leads to uncorrelated data, since 
\begin{equation*}
	\bCc = \tfrac1n \bXc \bXc^\top = \tfrac1n (\bLambdac)\!^{\frac12} \bAc\!\!\!^\top\! \bAc (\bLambdac)\!^{\frac12} = \tfrac1n \bLambdac,
\end{equation*}
and therefore, the analysis of the covariance matrix given in Section~\ref{sec:C} is no longer required.

\subsubsection*{Scaling the data}

We conclude this section by some interesting properties borrowed from \cite{Sahbi2007} and naturally completes our work. We describe some interesting properties of a matrix $\bDelta$ obtained from the kernel $\kappa(\bx_i,\bx_j) = -\frac12 \| \bx_i - \bx_j \|^2$. Consider scaling the data with some positive factor $\xi$. Since $\kappa(\xi\bx_i,\xi\bx_j)=\xi^2\kappa(\bx_i,\bx_j)$, then the corresponding matrix is $\bDeltas = \xi^2 \bDelta$. From this relation, it is easy to see that both matrices $\bDelta$ and $\bDeltas$ share the same eigenvectors, while any eigenvalue $\lambda_i$ of the former defines the eigenvalue $\xi^2\lambda_i$ of the latter. %To see this, let $\balpha_i$ be an eigenvector of $\bDelta$, then $\bDeltas \, \balpha_i = \xi^2 \bDelta \, \balpha_i = \xi^2 \lambda_i \, \balpha_i$. 
By considering the normalization given in \eqref{eq:kpca_normalization2} with $\|\balpha_i\| = \frac{1}{\lambda_i}$, we obtain from \eqref{eq:wj_alphaj2}: $\bX\!^\top\! \bw_j = \balpha_j = \bX_{\!\xi}\!\!\!^\top\! \bw_{\!\xi \, j}$. Therefore, projections onto axes defined by either PCA or ECA provide scale-invariant features, as show here within the MDS approach. These results extends the work in \cite{Sahbi2007} where only PCA is studies.

\section{Experimental results}

All established mathematical statements can be easily verified. To show this, we consider the well-known {\em iris} dataset (available at the UCI Machine Learning Repository), which has been extensively studied in the pattern recognition literature since Fisher's seminal paper \cite{Fis36}. The dataset consists of $150$ samples, divided equally into three classes, each sample having $4$ attributes. \tablename~\ref{tab:interlacing} shows the interlacing property of the largest three eigenvalues of each Gram matrix, $\blue \bK$ and $\red \bKc$, as settled in Theorem~\ref{th:ineq_lambda}. \figurename~\ref{fig:Theorem8} illustrates Theorem~\ref{th:lambdac}, where a lower bound on the largest $t$ eigenvalues of $\bKc$ is derived, while the specific cases of $t=1$ and $t=n$ are shown.

We illustrate next the impact of data centering in kernel-based methods. To this end, we consider a set of $n=200$ two-dimensional data generated from a banana-shaped distribution, with $(x_j,y_j) = (\zeta_i,\zeta_i^2+\xi)$ where $\zeta_i$ follows a uniform distribution on $[-1 ~ 1]$ and $\xi$ follows a zero-mean Gaussian distribution with a $0.2$ standard deviation. The Gram matrices were constructed by using the Gaussian kernel $\exp(-\tfrac{1}{2\sigma^2} \|\bx_i - \bx_j\|^2)$, where the bandwidth parameter was (naively) set to $\sigma = 0.5$. \tablename~\ref{tab:interlacing_banana} illustrates the interlacing property of the largest eigenvalues of the two kernel matrices, the non-centered $\bK$ with entries $\kappa(\bx_i,\bx_j)$ and the corresponding centered matrix $\bKc$. \figurename~\ref{fig:banana} shows the contours of the first five principal functions, when data are centered (implicitly) in the feature space (first row), and when data is not centered (second row). This illustrates that the first principal function of the non-centered case is related to the data mean, while the other principal functions are similar to those obtained from the centered case, with one order higher, namely results from $(\lambdac_i,\balphac_i)$ are comparable to results from $(\lambda_{i+1},\balpha_{i+1})$.

%Banana data () with  obtained from the . The first row confronts classical kernel-PCA, iterative kernel-PCA (single pass), and online kernel-PCA. Results obtained with the uncentered version of our approach (fifth row) show that, besides the first principal function which captures the center of the data, all next five principal functions are quite similar to those obtained with the centered version (fourth row). Data points are given in the first row by blue dots , while only elements contributing to the model are illustrated by red circles ($\red \circ$).

\begin{table}[t]
\caption{Illustration of the interlacing property of the eigenvalues of $\blue \bK$ and $\red \bKc$. In these experiments, the values are obtained from the {\em iris} dataset.}
\begin{center}
\renewcommand*{\arraystretch}{1.3}
\begin{tabular}{|c|c|c|c|c|c|c|c|c|c|c|c}
	\cline{2-2} \cline{4-4} \cline{6-6} \cline{8-8}
	\multicolumn{1}{c|}{} & $\blue\lambda_5$ & & $\blue\lambda_3$ & & $\blue\lambda_2$ & & $\blue\lambda_1$ \\
	\cline{1-1} \cline{3-3} \cline{5-5} \cline{7-7}
	\multicolumn{1}{|c@{~~$\leq$\!\!}}{\red$\!0.51\!$} & \multicolumn{1}{c@{~~$\leq$\!\!}}{\blue$\!0.67\!$} & \multicolumn{1}{c@{~~$\leq$\!\!}}{\red$\!2.39\!$} & \multicolumn{1}{c@{~~$\leq$\!\!}}{\blue$\!3.27\!$} & \multicolumn{1}{c@{~~$\leq$\!\!}}{\red$\!5.48\!$} & \multicolumn{1}{c@{~~$\leq$\!\!}}{\blue$\!59.72\!$} & \multicolumn{1}{c@{~~$\leq$\!\!}}{\red$\!98.45\!$} & \blue$\!101.68\!$ \\
	\cline{2-2} \cline{4-4} \cline{6-6} \cline{8-8}
	$\red\lambdac_5$ & & $\red\lambdac_3$ & & $\red\lambdac_2$ & & $\red\lambdac_1$ & \multicolumn{1}{|c}{} \\
	\cline{1-1} \cline{3-3} \cline{5-5} \cline{7-7}
\end{tabular}
\end{center}
\label{tab:interlacing}
\end{table}

\begin{figure}[t]
\begin{center}
\psfragscanon
\psfrag{t}[c][c]{\large$t$}
\psfrag{t1}[c][c]{\small~$1$}
\psfrag{t10}[c][c]{\small~$10$}
\psfrag{t100}[c][c]{\small~$100$}
\psfrag{t150}[c][c]{\small~$n$}
\hspace*{-1.5cm}%
\includegraphics[width=.63\textwidth]{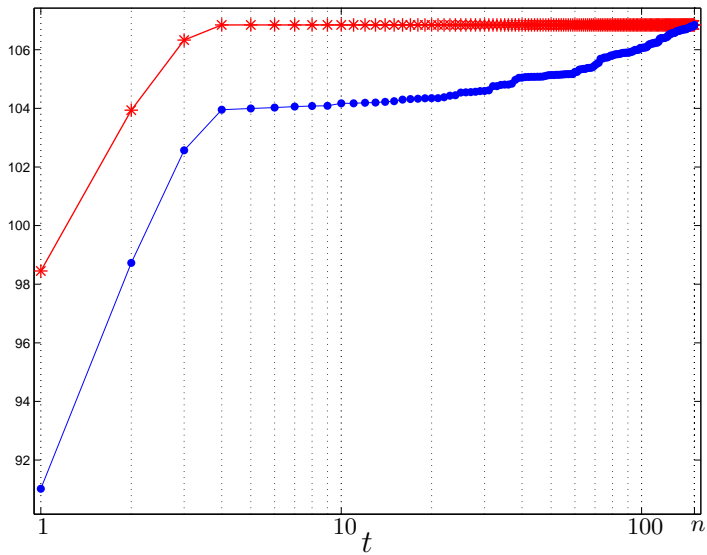}
\caption{Illustration of Theorem~\ref{th:lambdac}, in the case of the {\em iris} dataset with $n=150$. For any $t=1,2, \ldots, n$, the cumulative sum $\sum_{i=1}^t \lambdac_i$ (shown with~{\red $*$}) is greater than the cumulative sum of $d_i' = \lambda_i + (\|\bmu\|^2 - \tfrac{2}{n} \lambda_i) ~ (\balpha_i^\top \bone)^2$  (shown with~{\blue $\bullet$}), given in non-increasing order. We see that, for $t=1$, we have $\max_{i=1,\ldots,n} d_i' \leq \lambdac_1$, while we get the equality $\sum_{i=1}^t d_i' = \sum_{i=1}^t \lambdac_i$ when $t=n$.}
\label{fig:Theorem8}
\end{center}
\end{figure}

\begin{table}[t]
\caption{Illustration of the interlacing property of the eigenvalues of the kernel matrices, $\blue \bK$ and $\red \bKc$, where the Gaussian kernel is used. In these experiments, the values are obtained from the banana-shaped dataset.}
\begin{center}
\renewcommand*{\arraystretch}{1.3}
\begin{tabular}{|c|c|c|c|c|c|c|c|c|c|c|c}
	\cline{2-2} \cline{4-4} \cline{6-6} \cline{8-8}
	\multicolumn{1}{c|}{} & $\blue\lambda_5$ & & $\blue\lambda_3$ & & $\blue\lambda_2$ & & $\blue\lambda_1$ \\
	\cline{1-1} \cline{3-3} \cline{5-5} \cline{7-7}
	\multicolumn{1}{|c@{~~$\leq$\!\!}}{\red$\!10.17\!$} & \multicolumn{1}{c@{~~$\leq$\!\!}}{\blue$\!15.18\!$} & \multicolumn{1}{c@{~~$\leq$\!\!}}{\red$\!15.23\!$} & \multicolumn{1}{c@{~~$\leq$\!\!}}{\blue$\!26.73\!$} & \multicolumn{1}{c@{~~$\leq$\!\!}}{\red$\!31.33\!$} & \multicolumn{1}{c@{~~$\leq$\!\!}}{\blue$\!47.61\!$} &
		\multicolumn{1}{c@{~~$\leq$\!\!}}{\red$\!47.62\!$} & \blue$\!84.51\!$ \\
	\cline{2-2} \cline{4-4} \cline{6-6} \cline{8-8}
	$\red\lambdac_5$ & & $\red\lambdac_3$ & & $\red\lambdac_2$ & & $\red\lambdac_1$ & \multicolumn{1}{|c}{} \\
	\cline{1-1} \cline{3-3} \cline{5-5} \cline{7-7}
\end{tabular}
\end{center}
\bigskip\bigskip
\label{tab:interlacing_banana}
\end{table}

\begin{figure*}[t]
%\begin{center} %$\qquad\qquad\qquad$
%\rotatebox{90}{centered}
\hspace*{-2.5cm}%
\begin{overpic}[width=1.25\textwidth]{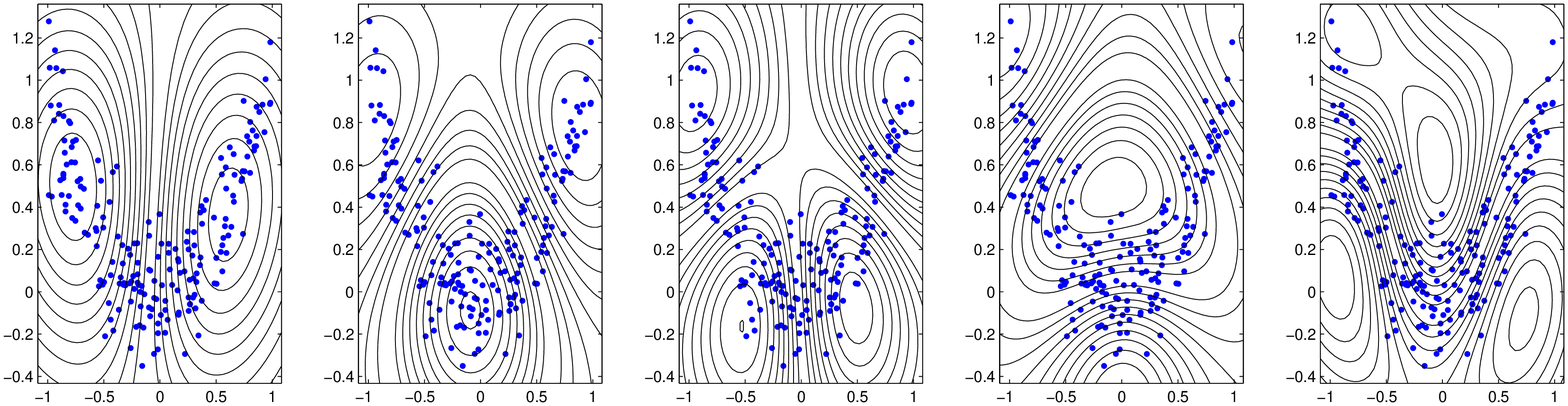}%,grid
	\put(9.5,8){\rotatebox{90}{centered}}% (centered)}
	\put(15.5,22.5){ $(\lambdac_1,\balphac_1)$}
	\put(31.75,22.5){ $(\lambdac_2,\balphac_2)$}
	\put(48.25,22.5){ $(\lambdac_3,\balphac_3)$}
	\put(64.5,22.5){ $(\lambdac_4,\balphac_4)$}
	\put(80.75,22.5){ $(\lambdac_5,\balphac_5)$}
\end{overpic}

\hspace*{-2.5cm}%
\begin{overpic}[width=1.25\textwidth]{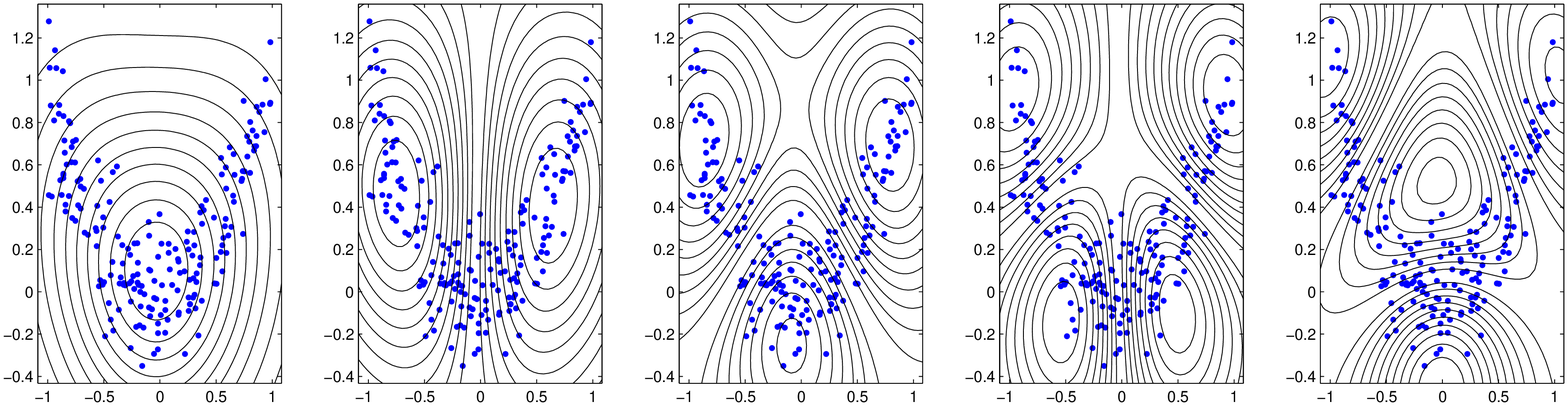}%,grid
	\put(9.5,8){\rotatebox{90}{non-centered}}% (centered)}
	\put(16.25,22.5){ $(\lambda_1,\balpha_1)$}
	\put(32.5,22.5){ $(\lambda_2,\balpha_2)$}
	\put(48.75,22.5){ $(\lambda_3,\balpha_3)$}
	\put(65,22.5){ $(\lambda_4,\balpha_4)$}
	\put(81.25,22.5){ $(\lambda_5,\balpha_5)$}
\put(25,22.75){\red\huge$\boldsymbol{\searrow}\!\!\!\!\!\!\!\boldsymbol{\nwarrow}$}
\put(41.5,22.75){\red\huge$\boldsymbol{\searrow}\!\!\!\!\!\!\!\boldsymbol{\nwarrow}$}
\put(57.75,22.75){\red\huge$\boldsymbol{\searrow}\!\!\!\!\!\!\!\boldsymbol{\nwarrow}$}
\put(74,22.75){\red\huge$\boldsymbol{\searrow}\!\!\!\!\!\!\!\boldsymbol{\nwarrow}$}
\end{overpic}
\caption{Illustration of the contours of the first five principal functions from the kernel-based PCA with the Gaussian kernel, with $n=200$ samples (given in blue dots $\blue \boldsymbol\cdot$) from a banana-shaped distribution. The first raw of figures is obtained with the eigendecomposition of the centered matrix $\bKc$, while the second row corresponds to the non-centered case with $\bK$.}
\label{fig:banana}
%\end{center}
\end{figure*}

\section{Final remarks}

The main objective of this paper was to bridge the gap between centered and uncentered data. We studied the impact of centering data on the eigendecomposition of the Gram matrix, thereby of benefit to most kernel-based methods. To be more specific in this paper, we explored the eigendecomposition of the covariance matrix, with results that corroborate recent work on conventional PCA. Our key motivation was to reconcile the centered Gram matrix in PCA and the non-centered Gram matrix, such as with the ECA for nonparametric density estimation. Moreover, we provided several extensions of our main results, beyond the conventional centering issue.

Other techniques in manifold learning and dimensionality reduction can also take advantage of this work, include ISOMAP, locally-linear embedding, eigenmaps, and spectral clustering, to name a few. Further future work will address the issue of the impact of kernel functions in the centering issue, as well as the impact of centering the data in the input space, as opposed to the implicit centering in the feature space with kernel-based methods. We will also study connections with spectral analysis in random matrix theory \cite{RandomMatrix09}.

\bigskip\bigskip

%------------------------------------------------------------------------------------
% Appendix
%------------------------------------------------------------------------------------

%\bigskip

%\section*{Acknowledgment} \addcontentsline{toc}{section}{Acknowledgment}
%The author would like to thank C\'edric Richard for the helpful discussions.

%------------------------------------------------------------------------------------
% Bibliographie
%------------------------------------------------------------------------------------

%%\scriptsize
\bibliography{biblio_ph,bibdesk_Paul}

\begin{thebibliography}{10}

\bibitem{Sun2009}
L.~Sun, S.~Ji, and J.~Ye, ``A least squares formulation for a class of
  generalized eigenvalue problems in machine learning,'' in {\em Proceedings of
  the 26th Annual International Conference on Machine Learning}, ICML '09, (New
  York, NY, USA), pp.~977--984, ACM, 2009.

\bibitem{PCA}
I.~Jolliffe, {\em Principal Component Analysis}.
\newblock New York, NY, USA: Springer-Verlag, 1986.

\bibitem{MDS}
T.~F. Cox and M.~A.~A. Cox, {\em Multidimensional Scaling}.
\newblock Monographs on Statistics and Applied Probability, London: Chapman and
  Hall / CRC, 2nd edition~ed., September 2000.

\bibitem{PLSR}
P.~Geladi and B.~R. Kowalski, ``Partial least-squares regression: a tutorial,''
  {\em Analytica Chimica Acta}, vol.~185, no.~0, pp.~1 -- 17, 1986.

\bibitem{CCA}
W.~H{\"a}rdle and L.~Simar, ``Canonical correlation analysis,'' in {\em Applied
  Multivariate Statistical Analysis}, pp.~321--330, Springer Berlin Heidelberg,
  2007.

\bibitem{Fis36}
R.~Fisher, ``The use of multiple measurements in taxonomic problems,'' {\em
  Annals of Eugenics}, vol.~7, pp.~179--188, 1936.

\bibitem{Aiz64}
M.~Aizerman, E.~Braverman, and L.~Rozonoer, ``Theoretical foundations of the
  potential function method in pattern recognition learning,'' {\em Automation
  and Remote Control}, vol.~25, pp.~821--837, 1964.

\bibitem{Sch99}
B.~Sch{\"o}lkopf, A.~Smola, and K.-R. M{\"u}ller, ``Kernel principal component
  analysis,'' {\em Advances in kernel methods: support vector learning},
  pp.~327--352, 1999.

\bibitem{Ros02}
R.~Rosipal and L.~Trejo, ``Kernel partial least squares regression in
  reproducing kernel {Hilbert} space,'' {\em Journal of Machine Learning
  Research}, vol.~2, pp.~97--123, 2002.

\bibitem{Fukumizu2007}
K.~Fukumizu, F.~R. Bach, and A.~Gretton, ``Statistical consistency of kernel
  canonical correlation analysis,'' {\em J. Mach. Learn. Res.}, vol.~8,
  pp.~361--383, Dec. 2007.

\bibitem{MikRaeWesSchMue99}
S.~Mika, G.~R{\"a}tsch, J.~Weston, B.~Sch{\"o}lkopf, and K.-R. M{\"u}ller,
  ``Fisher discriminant analysis with kernels,'' in {\em Neural Networks for
  Signal Processing IX} (Y.-H. Hu, J.~Larsen, E.~Wilson, and S.~Douglas, eds.),
  pp.~41--48, IEEE, 1999.

\bibitem{Shawetaylor_Cristianini}
J.~Shawe-Taylor and N.~Cristianini, {\em Kernel Methods for Pattern Analysis}.
\newblock Cambridge, UK: Cambridge University Press, 2004.

\bibitem{Bishop06}
C.~M. Bishop, {\em Pattern Recognition and Machine Learning (Information
  Science and Statistics)}.
\newblock Secaucus, NJ, USA: Springer-Verlag New York, Inc., 2006.

\bibitem{Has09}
T.~Hastie, R.~Tibshirani, and J.~Friedman, {\em The Elements of Statistical
  Learning: Data Mining, Inference, and Prediction}.
\newblock Springer Series in Statistics, Springer, second edition~ed., 2009.

\bibitem{12.tpami}
P.~Honeine, ``Online kernel principal component analysis: a reduced-order
  model,'' {\em IEEE Transactions on Pattern Analysis and Machine
  Intelligence}, vol.~34, pp.~1814--1826, September 2012.

\bibitem{svd_feature}
C.-D. Chang, C.-C. Wang, and B.~Jiang, ``Singular value decomposition based
  feature extraction technique for physiological signal analysis,'' {\em
  Journal of Medical Systems}, vol.~36, no.~3, pp.~1769--1777, 2012.

\bibitem{KSVD}
M.~Aharon, M.~Elad, and A.~Bruckstein, ``{K-SVD}: An algorithm for designing
  overcomplete dictionaries for sparse representation,'' {\em Signal
  Processing, IEEE Transactions on}, vol.~54, no.~11, pp.~4311--4322, 2006.

\bibitem{Gir02}
M.~Girolami, ``Orthogonal series density estimation and the kernel eigenvalue
  problem,'' {\em Neural Computation}, vol.~14, pp.~669--688, 2002.

\bibitem{KECA}
R.~Jenssen, ``Kernel entropy component analysis,'' {\em IEEE Transactions on
  Pattern Analysis and Machine Intelligence}, vol.~32, no.~5, pp.~847--860,
  2010.

\bibitem{Principe2010}
J.~C. Principe, {\em Information Theoretic Learning: Renyi's Entropy and Kernel
  Perspectives}.
\newblock Springer Publishing Company, Incorporated, 1st~ed., 2010.

\bibitem{Jenssen2009}
R.~Jenssen, ``Information theoretic learning and kernel methods,'' in {\em
  Information Theory and Statistical Learning} (F.~Emmert-Streib and M.~Dehmer,
  eds.), pp.~209--230, Springer US, 2009.

\bibitem{unmix_overview}
J.~M. Bioucas-Dias, A.~Plaza, N.~Dobigeon, M.~Parente, Q.~Du, P.~Gader, and
  J.~Chanussot, ``Hyperspectral unmixing overview: Geometrical, statistical,
  and sparse regression-based approaches,'' {\em IEEE J. Sel. Topics Appl.
  Earth Observations and Remote Sens.}, vol.~5, pp.~354--379, April 2012.

\bibitem{Yeung01092001}
K.~Y. Yeung and W.~L. Ruzzo, ``Principal component analysis for clustering gene
  expression data,'' {\em Bioinformatics}, vol.~17, no.~9, pp.~763--774, 2001.

\bibitem{eigenfaces}
J.~Ruiz-del Solar and P.~Navarrete, ``Eigenspace-based face recognition: a
  comparative study of different approaches,'' {\em Systems, Man, and
  Cybernetics, Part C: Applications and Reviews, IEEE Transactions on},
  vol.~35, no.~3, pp.~315--325, 2005.

\bibitem{pca_ocr}
V.~Deepu, S.~Madhvanath, and A.~Ramakrishnan, ``Principal component analysis
  for online handwritten character recognition,'' in {\em Pattern Recognition,
  2004. ICPR 2004. Proceedings of the 17th International Conference on},
  vol.~2, pp.~327--330 Vol.2, 2004.

\bibitem{lee99}
D.~D. Lee and H.~S. Seung, ``{Learning the parts of objects by non-negative
  matrix factorization.},'' {\em Nature}, vol.~401, pp.~788--791, Oct. 1999.

\bibitem{ICAhandbook10}
P.~Comon and C.~Jutten, eds., {\em Handbook of Blind Source Separation:
  Independent Component Analysis and Applications}.
\newblock Academic Press, March 2010.

\bibitem{berman1994nonnegative}
A.~Berman and R.~Plemmons, {\em Nonnegative Matrices in the Mathematical
  Sciences}.
\newblock No.~part 11 in Classics in Applied Mathematics, Society for
  Industrial and Applied Mathematics, 1994.

\bibitem{Hor12}
R.~A. Horn and C.~R. Johnson, {\em Matrix analysis}.
\newblock New York, NY, USA: Cambridge University Press, 2nd edition~ed.,
  December 2012.

\bibitem{PCAuncenter}
J.~Cadima and I.~Jolliffe, ``On relationships between uncentred and
  column-centred principal component analysis,,'' {\em Pakistan Journal of
  Statistics}, vol.~25, no.~4, pp.~473--503, 2009.

\bibitem{Gestel2002}
T.~V. Gestel, J.~A.~K. Suykens, G.~Lanckriet, A.~Lambrechts, B.~D. Moor, and
  J.~Vandewalle, ``Bayesian framework for least-squares support vector machine
  classifiers, gaussian processes, and kernel fisher discriminant analysis,''
  {\em Neural Comput.}, vol.~14, pp.~1115--1147, May 2002.

\bibitem{Park2005}
C.~H. Park and H.~Park, ``Nonlinear discriminant analysis using kernel
  functions and the generalized singular value decomposition,'' {\em SIAM
  Journal on Matrix Analysis and Applications}, vol.~27, no.~1, pp.~87--102,
  2005.

\bibitem{Cortes05}
C.~Cortes, M.~Mohri, and J.~Weston, ``A general regression technique for
  learning transductions,'' in {\em Proc. of the 22nd international conference
  on machine learning ({ICML})}, (New York, NY, USA), pp.~153--160, ACM, 2005.

\bibitem{Markovsky09}
I.~Markovsky, ``Matrix centering and low-rank approximation,'' tech. rep.,
  Southampton, UK, 2009.

\bibitem{Raudenbush89}
S.~W. Raudenbush, ``Centering predictors in multilevel analysis: choices and
  consequences,'' {\em Multilevel Modelling Newsletter}, vol.~1, no.~2,
  pp.~10--12, 1989.

\bibitem{Longford89}
N.~T. Longford, ``To center or not to center,'' {\em Multilevel Modelling
  Newsletter}, vol.~1, no.~3, pp.~7--11, 1989.

\bibitem{Hofmann1998}
D.~A. Hofmann and M.~B. Gavin, ``Centering decisions in hierarchical linear
  models: Implications for research in organizations,'' {\em Journal of
  Management}, vol.~24, no.~5, pp.~623--641, 1998.

\bibitem{Paccagnella06}
O.~Paccagnella, ``Centering or not centering in multilevel models? the role of
  the group mean and the assessment of group effects,'' {\em Evaluation
  Review}, vol.~30, no.~1, pp.~66--85, 2006.

\bibitem{kreft1998}
I.~Kreft, I.~Kreft, and J.~de~Leeuw, {\em Introducing Multilevel Modeling}.
\newblock ISM (London, England), SAGE Publications, 1998.

\bibitem{Papaspiliopoulos03}
O.~Papaspiliopolous, G.~O. Roberts, and M.~Sk\^{o}ld, ``Non-centered
  parameterizations for hierarchical models and data augmentation,'' in {\em
  Baysian Statistics 7} (J.~M. Bernardo, M.~J. Bayarri, J.~O. Berger, A.~P.
  Dawid, D.~Heckerman, A.~F.~M. Smith, and M.~West, eds.), pp.~307--326, Oxford
  University Press, 2003.

\bibitem{Yu2011}
Y.~Yu and X.-L. Meng, ``To center or not to center: That is not the question -
  an ancillarity-sufficiency interweaving strategy (asis) for boosting mcmc
  efficiency,'' {\em Journal of Computational and Graphical Statistics},
  vol.~20, no.~3, pp.~531--570, 2011.

\bibitem{Barndorff-Nielsen2004}
O.~E. Barndorff-Nielsen and N.~Shephard, ``Econometric analysis of realized
  covariation: High frequency based covariance, regression, and correlation in
  financial economics,'' {\em Econometrica}, vol.~72, pp.~885--925, May 2004.

\bibitem{Mercer}
J.~Mercer, ``Functions of positive and negative type and their connection with
  the theory of integral equations,'' {\em Royal Society of London
  Philosophical Transactions Series A}, vol.~209, pp.~415--446, 1909.

\bibitem{reproducingkernel}
N.~Aronszajn, ``Theory of reproducing kernels,'' {\em Transactions of the
  American Mathematical Society}, vol.~68, 1950.

\bibitem{11.spm}
P.~Honeine and C.~Richard, ``Preimage problem in kernel-based machine
  learning,'' {\em IEEE Signal Processing Magazine}, vol.~28, no.~2,
  pp.~77--88, 2011.

\bibitem{KPCA}
B.~Sch\"{o}lkopf, A.~Smola, and K.-R. M\"{u}ller, ``Nonlinear component
  analysis as a kernel eigenvalue problem,'' {\em Neural Comput.}, vol.~10,
  pp.~1299--1319, July 1998.

\bibitem{Tax02}
D.~M.~J. Tax and P.~Juszczak, ``Kernel whitening for one-class
  classification,'' in {\em Proc. of the First International Workshop on
  Pattern Recognition with Support Vector Machines}, (London, UK), pp.~40--52,
  Springer-Verlag, 2002.

\bibitem{Parzen1962}
E.~Parzen, ``On estimation of a probability density function and mode,'' {\em
  Annals of Mathematical Statistics}, vol.~33, pp.~1065--1076, 1962.

\bibitem{Izenman1991}
A.~Izenman, ``Recent developments in nonparametric density estimation,'' {\em
  Journal of the American Statistical Association}, vol.~86, no.~413,
  pp.~205--224, 1991.

\bibitem{R}
{R Core Team}, {\em R: A Language and Environment for Statistical Computing}.
\newblock R Foundation for Statistical Computing, Vienna, Austria, 2013.

\bibitem{Yanai2011}
H.~Yanai, K.~Takeuchi, and Y.~Takane, {\em {Projection Matrices, Generalized
  Inverse Matrices, and Singular Value Decomposition}}.
\newblock New York, NY: Springer New York, 2011.

\bibitem{takane1991principal}
Y.~Takane and T.~Shibayama, ``Principal component analysis with external
  information on both subjects and variables,'' {\em Psychometrika}, vol.~56,
  no.~1, pp.~97--120, 1991.

\bibitem{Rao79}
C.~Rao, ``Separation theorems for singular values of matrices and their
  applications in multivariate analysis,'' {\em Journal of Multivariate
  Analysis}, vol.~9, no.~3, pp.~362 -- 377, 1979.

\bibitem{Rao80}
C.~R. Rao, ``Matrix approximations and reduction of dimensionality in
  multivariate statistical analysis,'' {\em Multivariate Analysis V}, vol.~5,
  pp.~3--22, 1980.

\bibitem{Horn54}
A.~Horn, ``Doubly stochastic matrices and the diagonal of a rotation matrix,''
  {\em American Journal of Mathematics}, vol.~76, pp.~620--630, 1954.

\bibitem{majorization2011}
A.~W. Marshall, I.~Olkin, and B.~C. Arnold, {\em Inequalities : Theory of
  Majorization and its Applications}.
\newblock New York: Springer Science+Business Media, LLC, 2011.

\bibitem{Meier1953}
P.~Meier, ``Variance of a weighted mean,'' {\em Biometrics}, vol.~9, no.~1,
  pp.~pp. 59--73, 1953.

\bibitem{Higuchi:2004}
I.~Higuchi and S.~Eguchi, ``Robust principal component analysis with adaptive
  selection for tuning parameters,'' {\em J. Mach. Learn. Res.}, vol.~5,
  pp.~453--471, Dec. 2004.

\bibitem{SuzukiHSSF13}
I.~Suzuki, K.~Hara, M.~Shimbo, M.~Saerens, and K.~Fukumizu, ``Centering
  similarity measures to reduce hubs,'' in {\em Proceedings of the 2013
  Conference on Empirical Methods in Natural Language Processing, EMNLP 2013,
  18-21 October 2013, Grand Hyatt Seattle, Seattle, Washington, USA, A meeting
  of SIGDAT, a Special Interest Group of the ACL}, pp.~613--623, ACL, 2013.

\bibitem{Radovanovic2010}
M.~Radovanovi\'{c}, A.~Nanopoulos, and M.~Ivanovi\'{c}, ``Hubs in space:
  Popular nearest neighbors in high-dimensional data,'' {\em J. Mach. Learn.
  Res.}, vol.~11, pp.~2487--2531, Dec. 2010.

\bibitem{suttorp2009ecm}
T.~Suttorp, N.~Hansen, and C.~Igel, ``Efficient covariance matrix update for
  variable metric evolution strategies,'' {\em Machine Learning}, vol.~75,
  no.~2, pp.~167--197, 2009.

\bibitem{Fonseca2013}
R.~Fonseca, O.~Leeuwenburgh, P.~v.~d. Hof, and J.~Jansen, ``Improving the
  ensemble optimization method through covariance matrix adaptation
  ({CMA-EnOpt}),'' in {\em SPE Reservoir Simulation Symposium 2013}, (The
  Woodlands, TX), pp.~1124--1133, Society of Petroleum Engineers, February
  2013.

\bibitem{Djuric2010}
P.~Djuric, ``From nature to methods and back to nature,'' in {\em Signal
  Processing and Multimedia Applications (SIGMAP), Proceedings of the 2010
  International Conference on}, pp.~IS--7--IS--7, 2010.

\bibitem{Muller2010}
C.~L. M\"{u}ller and I.~F. Sbalzarini, ``Gaussian adaptation revisited: An
  entropic view on covariance matrix adaptation,'' in {\em Proceedings of the
  2010 International Conference on Applications of Evolutionary Computation -
  Volume Part I}, EvoApplicatons'10, (Berlin, Heidelberg), pp.~432--441,
  Springer-Verlag, 2010.

\bibitem{Meyer-Nieberg2006}
S.~Meyer-Nieberg and H.-G. Beyer, ``Self-adaptation in evolutionary
  algorithms,'' in {\em Parameter Setting in Evolutionary Algorithm},
  pp.~47--76, Springer, 2006.

\bibitem{Kramer2010}
O.~Kramer, ``Evolutionary self-adaptation: a survey of operators and strategy
  parameters,'' {\em Evolutionary Intelligence}, vol.~3, no.~2, pp.~51--65,
  2010.

\bibitem{distancetrick}
B.~Sch{\"o}lkopf, ``The kernel trick for distances.,'' in {\em NIPS} (T.~K.
  Leen, T.~G. Dietterich, and V.~Tresp, eds.), pp.~301--307, MIT Press, 2000.

\bibitem{Sahbi2007}
H.~Sahbi, ``Kernel pca for similarity invariant shape recognition,'' {\em
  Neurocomput.}, vol.~70, pp.~3034--3045, Oct. 2007.

\bibitem{RandomMatrix09}
Z.~Bai and J.~Silverstein, {\em Spectral Analysis of Large Dimensional Random
  Matrices}.
\newblock Springer Series in Statistics, Springer, 2009.

\end{thebibliography}
\bibliographystyle{ieeetr}

\begin{biography}[{}]%\includegraphics[width=1in,height=1.25in,clip,keepaspectratio]{Honeine_p}}]
{Paul Honeine} (M'07) was born in Beirut, Lebanon, on October 2, 1977. He received the Dipl.-Ing. degree in mechanical engineering in 2002 and the M.Sc. degree in industrial control in 2003, both from the Faculty of Engineering, the Lebanese University, Lebanon. In 2007, he received the Ph.D. degree in Systems Optimisation and Security from the University of Technology of Troyes, France, and was a Postdoctoral Research associate with the Systems Modeling and Dependability Laboratory, from 2007 to 2008. Since September 2008, he has been an assistant Professor at the University of Technology of Troyes, France. His research interests include nonstationary signal analysis and classification, nonlinear signal processing, sparse representations, machine learning, and wireless sensor networks. He is the co-author (with C. Richard) of the 2009 Best Paper Award at the IEEE Workshop on Machine Learning for Signal Processing.
\end{biography}

\end{document}